\def\R{\mathbb{R}}
\newtheorem{fact}{Fact}
\newtheorem{lem}{Lemma}
\newtheorem{thm}{Theorem}
\newcommand{\beqan}{\begin{eqnarray*}}
\newcommand{\eeqan}{\end{eqnarray*}}
\newcommand{\beqa}{\begin{eqnarray}}
\newcommand{\eeqa}{\end{eqnarray}}
\def\hist{\mathcal{H}_{t}}
\def\KL{\texttt{KL}}
\def\E{\mathbb{E}}
\def\H{\mathbb{H}}
\newcommand{\A}{\mathcal{A}}
\newcommand{\Prob}{\mathbb{P}}
\newcommand{\argmax}{\mathop{\mathrm{argmax}}}
\newcommand{\Dir}{\texttt{Dir}}
\newcommand{\DP}{\texttt{DP}}
\newcommand{\bH}{{\mathbb{H}}}
\title{Leveraging priors on distribution functions for multi-arm bandits}
\author{Sumit Vashishtha, Odalric-Ambrym Maillard}
\keywords{Bayesian nonparametric statistics, reinforcement learning, information theory } 
\begin{document}

\makeCover  
\maketitle  

\begin{abstract}
 We introduce Dirichlet Process Posterior Sampling (DPPS), a Bayesian non-parametric algorithm for multi-arm bandits based on Dirichlet Process (DP) priors. Like Thompson-sampling, DPPS is a probability-matching algorithm, i.e., it plays an arm based on its posterior-probability of being optimal. Instead of assuming a
parametric class for the reward generating distribution of each arm, and then putting a prior on the  parameters, in DPPS the reward generating distribution is directly modeled 
using DP priors. DPPS provides a principled approach to incorporate prior belief about the bandit environment, and in the noninformative limit of the DP posteriors (i.e. Bayesian Bootstrap), we recover Non Parametric Thompson Sampling (NPTS), a popular non-parametric bandit algorithm,  as a special case of DPPS. We employ stick-breaking representation of the DP priors, and show excellent empirical performance of DPPS in challenging synthetic and real world bandit environments. Finally,  using an information-theoretic analysis, we show non-asymptotic optimality of DPPS in the Bayesian regret setup. 

\end{abstract}

\section{Introduction}

Multi Arm Bandits (MAB) is a paradigmatic framework to study the exploration $\sim$ exploitation dilemma in sequential decision making under uncertainty. Standard algorithms developed within this framework such as Upper-Confidence Bounds (UCB) based algorithms~\citep{auer2002finite} and Thompson sampling (TS)~\citep{thompson1933likelihood,russo2018tutorial} have proven to be useful in applications such as clinical trials, ad-placement strategies, etc. However, it remains difficult to apply them to more complicated real world settings such as those arising in agriculture or experimental sciences wherein the underlying uncertainty mechanism is far more sophisticated: the unknown reward distribution corresponding to each arm/action may not even conform to a parametric class of distributions such as the single-parameter exponential family, and usually exhibit characteristics such as multi-modality.  With some abuse of terminology, we shall refer to this challenging setting of the MABs as \textit{non-parametric} MABs, and we report an optimal algorithm for this setting in the current paper.  

To begin with, it’s worthwhile to consider the limitations of UCB and Thompson sampling algorithms in some more detail.  Firstly, the  efficient performance of UCB type algorithms rely on the construction of tight high-probability confidence sequences~\citep{abbasi2011regret,auer2002finite}. However, for complex problems, it becomes difficult to design such sequences, and only approximate confidence sequences can be designed, which generally tend to be statistically suboptimal~\citep{filippi2010parametric}. Next, although  Thompson-Sampling (TS) ~\citep{thompson1933likelihood,kaufmann2012thompson} is a neat and elegant \textit{Bayesian} algorithm, that enjoys the flexibility of incorporating \textit{prior} knowledge about the bandit environment, it's efficiency is limited to the regime of \textit{conjugate} prior/posterior distributions of the relevant scalar/vector parameter, which is generally not possible beyond a few special cases of bandit environments, e.g. Bernoulli, Gaussian. In other regimes, the posterior distributions no longer exhibit a closed form, and require the application of approximate inference schemes such as Markov Chain Monte Carlo (MCMC), variational inference, etc, to draw samples from the posterior distributions.  This is usually computationally expensive, and can easily lead to the suboptimal performance of Thompson sampling~\citep{phan2019thompson}.

In light of the above limitations, one is tempted to look for a statistical-inference technique suitable for handling complicated real-world distribution functions, and soon finds the answer in Statistical Bootstrap ~\citep{efron1992bootstrap,efron1994introduction} which is a procedure for estimating the distribution of an estimator by resampling (often with replacement) data or a model estimated from the data. Bootstrapping has been widely used as an alternative to statistical-inference based on the assumption of a parametric-model when that assumption is in doubt, or where parametric inference is impossible or requires complicated formulas for the calculation of standard errors. 

This naturally motivates the use of statistical-Bootstrap for the nonparametric setting of MAB discussed above. In fact, most of the existing algorithms for nonparametric MABs are based on different versions of the Bootstrap in one way or the other~\citep{kveton2019garbage,baransi2014sub,osband2015bootstrapped}. However, these methods crucially rely on \textit{artifical history/pseudo-rewards} to perform well, and can perform sub-optimally without a suitable mechanism to generate such artificial-history/pseudo-rewards~\citep{osband2015bootstrapped}. Additionally,  these bootstrap sampling based algorithms cannot account for uncertainty that does not come from the observed data~\citep{osband2016deep}. In other words, they do not have a mechanism to incorporate \textit{prior} knowledge about the environment which can be utilized to enhance the performance of the algorithm. This efficient harnessing of prior knowledge for improved performance is hallmark of Bayesian algorithms, and we are unaware of any bandit algorithm that enjoys the flexibility of being completely Bayesian and still efficient in the nonparametric MAB setting. Essentially, this calls for an extension of the parametric Thompson sampling, which is already Bayesian, but suffers its nemesis in the non-parametric MAB setting for reasons discussed before.  Consequentially, this leads us to the following question,

\textit{Can we design a truly Bayesian algorithm that performs efficiently in the setting of nonparametric multi-arm bandits?}

We answer this question in the affirmative by designing an algorithm that draws from the strengths of Bayesian Nonparametric (BN) priors. In the past, a nice line of work utilized BN priors on the \textit{function spaces}, i.e. Gaussian Process (GP) priors, to contribute the well known GP-UCB algorithm~\citep{srinivas2009gaussian}, but it's not clear how this can be naturally adapted to the nonparametric MAB setting that we are interested in the current paper, and we believe that a more natural choice of BN priors in the context of multi-arm bandits would be the priors on the space of probability distributions instead of those on a much larger function space (restricted only by the choice of their smoothness)~\citep{rasmussen2003gaussian}. Dirichlet Processes (DPs), denoted as $\mathrm{DP}(\alpha, \mathrm{F_{0}})$, (where $\alpha$ and $\mathrm{F_{0}}$ are the related hyperparameters, known as the concentration parameter, and the base measure respectively), fall in the category of BN priors on the space of probability distributions, and have been widely used in real world statistical applications~\citep{castillo2024bayesian,muller2015bayesian,ghosal2010dirichlet}, . We extend the strength of DPs to the multi-arm bandit setting by contributing Dirichlet Process Posterior sampling (DPPS). 

DPPS directly treats reward distribution functions as \textit{random objects}, modeling them using DP priors, and easily updating these priors utilizing the property of {conjugacy} of DP priors to obtain DP posteriors, and making decisions based on the the posterior probability of optimal actions induced by these DP posteriors. Since no parametric class of distribution for the arm reward distributions is assumed apriori, DPPS allows for modeling arbitrary reward distributions, and hence  is amenable to the non-parametric MAB setting.   This is in contrast to parametric Thompson sampling which assumes a parametric class for reward distribution apriori, and puts a prior on a scalar/vector parameter, often the sufficient-statistic of that parametric-class, thereby restricting its application to a small set of problems. Furthermore, these parametric priors do not enjoy the property of conjugacy very often, and it becomes challenging to sample from their posterior distributions even for the restricted class of problems they can model appropriately.  We will illustrate this strength of DPPS in a series of numerical experiments in Section~\ref{sec:Numerics} for different bandit environments.

Since DPPS is a Bayesian algorithm, it provides a principled mechanism to incorporate prior knowledge about the bandit environment, specifically through the base measure of the DP priors. In fact, based on the hyperparameter, $\alpha$, of the DP prior it's easy to delineate uncertainty captured in DP priors/posteriors into two parts --  contributions from the observed data and contributions from the prior. In the limit of $\alpha \to  0$, one recovers the noninformative DP prior, also referred to as \textit{Bayesian Bootstrap} which is the basis for Non Paramteric Thompson sampling introduced in \cite{riou2020bandit}.    We discuss this in Section ~\ref{subsec:NPTS-DPPS}, and also give a proof of concept of the flexibility of DPPS to incorporate prior knowledge about bandit environment through a simple example in Section~\ref{sec:Numerics}. Additionally, in Section~\ref{sec:Theoretcial-analysis}, we extend an elegant information-theoretic analysis framework for parametric Thompson sampling to a wider set of probability matching algorithms that derive the posterior probability of optimal actions using a valid/proper Bayesian strategy. This extension, along with an important lemma on the tail of random distributions sampled from DP prior/posterior shall lead us to the result of Theorem~\ref{thm:Bayesian-regret-DPPS} which provides an upper bound on the Bayesian regret of DPPS. 

\section{Problem formulation }
\label{sec:Setup-notation}

In this section, we formalize the problem of multi-arm bandits and introduce the necessary notation. We also discuss Thompson-sampling, a Bayesian  probability matching algorithm, in order to lay some ground for introducing its nonparametric counterpart, DPPS, later in this paper. 

\paragraph{Multi-armed bandits} In the $K$-arm bandit problem, the agent is presented with $K$ arms/distributions/actions $\{p_k\}_{k=1}^K$. At time-steps $t=0,1,\ldots$, the agent executes an action $A_t \in \mathcal{A}$, $\mathcal{A}$ being the set of actions such that $|\mathcal{A}| = K$; then it observes the corresponding reward $R_{t,A_t}\in \raisebox{2pt}{$\chi$}$. In this paper, we choose $\raisebox{2pt}{$\chi$}$ to the set of $\sigma$-sub-Gaussian random variables, i.e. $\E\left[e^{(X -\E[X])t}\right] \le e^{\frac{\sigma^{2}t^{2}}{2}}$, $\forall X \in \raisebox{2pt}{$\chi$}$, and for all $s$.  Let $R_{t} \equiv (R_{t,a})_{a\in\mathcal{A}}$ be the vector of rewards at time $t$. The “true reward-vector distribution” $p^\star$ is seen as a distribution over  $\raisebox{2pt}{$\chi$}^{|\mathcal{A}|}$ that is itself randomly drawn from the family of distributions $\mathcal{P}$. We assume that, conditioned on $p^\star$, $(R_{t})_{t\in N}$ is an iid sequence with each element $R_{t}$ distributed according to $p^\star$.  The agent's experience through time-step $t$ is encoded by a history $\hist = (A_1,R_{1,A_1},\ldots,A_{t},R_{t,A_{t}})$. The action $A_{t}$ is chosen based on $\hist$ utilizing a sequence of deterministic functions, $\pi = (\pi_{t})_{t\in N}$, so that  $\pi_{t}(a) = \Prob(A_{t}=a|\hist)$. $\pi$ is usually referred to as randomized \textit{ policy}.  The $T$ period \textit{regret} of the  sequence of actions, $A_1,..,A_T$, induced by $\pi$, is the random variable, 
\begin{equation*}
\label{eq:Byesian-regret}
\mathrm{Regret}(T,\pi) = \sum_{t=1}^{T} \E[R_{t,A^\star} - R_{t,A_t}]
\end{equation*}

where $A^\star \in \mathcal{A}$ is the optimal action, i.e. $A^\star  \in \underset{a \in \A} \argmax \,  \E[R_{1,a}|p^\star]$ . In this paper, we study the expected regret or \textit{Bayesian regret} given as follows,
\[\E\left[\mathrm{Regret}(T,\pi)\right] = \E\left[\sum_{t=1}^{T} \left[R_{t,A^\star} - R_{t,A_t} \right]\right] ,\]

where the expectation integrates over random reward realizations, the prior distribution of $p^\star$, and algorithmic randomness.

\paragraph{Further notation} We set $\alpha_t(a) = \Prob\left( A^\star=a \vert \hist  \right)$ to be the posterior distribution of $A^\star$. Also, we use the shorthand notation $\E_{t}[\cdot]=\E_{t}[\cdot | \hist]$ for conditional expectations under the posterior distribution, and similarly write $\Prob_{t}(\cdot)=\Prob(\cdot | \hist)$. 
For two probability measures $P$ and $Q$ over a common measurable space, if $P$ is absolutely continuous with respect to $Q$, the \textit{Kullback-Leibler divergence} between $P$ and $Q$ is
\begin{equation}
\KL(P || Q)= \int P \log \left( \frac{dP}{dQ} \right)dP
\end{equation}
where $\frac{dp}{dq}$ is the Radon--Nikodym derivative of $p$ with respect to $q$. For a probability distribution $p$ over a finite set $\mathcal{X}$, the {\it Shannon entropy} of $p$ is defined as $\bH(p) =- \sum_{x\in\mathcal{X}} p(x) \log\left( p(x) \right)$.
The {\it mutual information} under the posterior distribution between two random variables $X_1:\Omega \rightarrow \mathcal{X}_{1}$, and $X_2:\Omega \rightarrow \mathcal{X}_{2}$, denoted by
\begin{equation}
I_{t}(X_1 ; X_2) := \KL\left( \mathbb{P}\left( (X_{1}, X_{2})  \in \cdot \vert \hist \right) \,\, || \,\, \mathbb{P}\left( X_{1}  \in \cdot \vert \hist \right) \mathbb{P}\left( X_{2}  \in \cdot \vert \hist \right)  \right),
\end{equation}
is the Kullback-Leibler divergence between the joint posterior distribution of $X_{1}$ and $X_{2}$ and the product of the marginal distributions. Note that $I_{t}(X_1 ; X_2)$ is a random variable because of its dependence on the conditional probability measure $\Prob\left( \cdot \vert \hist \right)$.

\paragraph{Thompson Sampling}

Thompson Sampling is a specific class of probability matching algorithms which \textit{matches} in each round, the action-selection probability  to the posterior probability-distribution of optimal action, i.e. $\Prob(A_{t} = a|\mathcal{H}_{t})=\Prob(A^\star=a|\mathcal{H}_{t})$.  First, a parametric class for the reward distribution functions $\{\pi_k\}_{k=1}^K$ is assumed, such that for each arm there is a $\theta_{a}$ which maps the arm to a distribution in that class.  Thompson sampling is a Bayesian algorithm in the sense that it considers each of these unknown $\theta_{a}$, as a random variable initially distributed according to a prior distribution, i.e.,  $\theta_a \sim \pi_{a,0}$, and this prior evolves to a  posterior distribution, $\pi_{a,t}$, in round $t$, through Bayes rule, as rewards are obtained in each round. At each time, a sample $\theta_{a,t}$ is drawn from each posterior $\pi_{a,t}$, and then the algorithm chooses to sample  $a_t=\arg \max_{a\in\{1,\dots,K\}}\{\mu(\theta_{a,t})\}$, where $\mu(\theta_{a,t})$ represents the mean of the  parametric reward distributions with parameter $\theta_{a,t}$. 






\section{Background on Dirichlet processes}
\label{sec:Background-DP}

Before discussing the main algorithm proposed in this paper, It is important to concretely discuss a few key aspects concerning Dirichlet Processes, and this is what we do in this section. 

\paragraph{Dirichlet distribution} is a multivariate generalization of the Beta distributions.  We denote the Dirichlet distribution of parameters $(\alpha_{1},. . .,\alpha_{n})$ by $\Dir(\alpha_{1},. . .,\alpha_{n})$ whose density function is given by $\frac{\Gamma(\sum_{i=1}^{n}\alpha^{i})}{\prod_{i=1}^{n}\Gamma(\alpha^{i})} \prod_{i=1}^{n}w_{i}^{\alpha^{i}-1}$ for $(w_{1},. . .,w_{n}) \in [0,1]^{n}$ such that $\sum_{i=1}^{n}w_{i} = 1$
\paragraph{Dirichlet Processes} In the Bayesian formalism (see also section~\ref{sec:Bayesian-framework} for more details), an unknown object is treated as a random variable which is then assumed to be drawn from a prior distribution. A Bayesian solution requires developing methods of computation of the posterior distribution from this prior based on available information about the unknown object. When the unknown object is a probability measure (a cumulative distribution function in the present paper, to be precise), one then faces a non-trivial question of how to even define a prior on an infinite dimensional object and also take care of the constraints of a probability measure (sum up to 1 over its support).  An elegant solution was offered in \cite{ferguson1973bayesian} wherein the author introduced the idea
of a Dirichlet process (DP) – a probability distribution on the space of probability measures which induces finite-dimensional Dirichlet distributions when the data are grouped. To look at it concretely, consider a  random probability measure, $G$, on some nice (e.g. Polish) space $\Theta$ (e.g. $\R$). $G$ is said to be DP distributed with base probability measure $F$ (e.g. a Gaussian, Beta, Bernoulli, etc) and concentration parameter $\alpha \in \mathbb{R}^{+}$, denoted as $G \sim \DP(\alpha,F)$, if
\[(G(A_1), . . . , G(A_r)) \sim \Dir(\alpha F(A_1), . . . , \alpha F(A_r)) \]
for every finite measurable partition $A_1,...,A_r$ of the space $\Theta$. 

Having witnessed the construction of DP priors on the space of probability measures, one naturally wonders, how to derive posteriors from these priors, and for that we discuss the important property of \textit{conjugacy} in some nonparametric priors.

\paragraph{Conjugacy} In the Bayesian parametric framework, one can usually use Bayes rule for deriving posteriors for parametric models, however for non-parametric case, Bayes rule cannot be used in general (see Appendix ~\ref{subsec:Bayes_rule} for technical details). Posteriors for some nonparametric priors can be derived utilizing the property of conjugacy. Particularly, an observation model $M \in \mathcal{G}$, and the family of priors $\mathcal{Q}$ are called conjugate if, for any sample size $n$ and any observation sequence $X_{1}, . . ., X_{n}$, the posterior under any prior $Q\in \mathcal{Q}$ is again an element of $\mathcal{Q}$. Also, merely possessing the property of conjugacy is not enough to form a viable Bayesian prior.  For example, a generalization of DPs is the so-called Neutral To The Right (NTTR) processes~\citep{dey2003some}. Entire family of NTTR is  known to be conjugate, but besides the specific case of DPs, there's no known explicit method of obtaining \textit{posterior indices} in other members of the NTTR family. This leads us to discuss the form of DP posteriors next.

\paragraph{Dirichlet Process posteriors} Let $X_1$, ... , $X_n$ be a sample from an unknown real-valued distribution $G_{0}$ where $X_i \in \R$. To estimate $G_{0}$ from a Bayesian perspective (see Appendix~\ref{sec:Bayesian-framework} ) we put a prior on the set of all distributions $\mathcal{G}$ and then we compute the posterior distribution of $G_{0}$, given ${\bf X}_n = (X_1, . . . , X_n)$. Let's put a DP prior   on the set $\mathcal{G}$. Correspondingly, Let $\DP(\alpha,F_{0})$, denote the DP prior. The distribution $F_{0}$ can be thought of as a prior guess at the true distribution $G_{0}$. The
number $\alpha$ controls how tightly concentrated the prior is around $F_0$.  With a DP prior on $G_{0}$, the posterior of $G_{0}$, given ${\bf X}_n = (X_1, . . . , X_n)$, enjoys \textit{conjugacy}, i.e, it is itself a DP given as, $\DP(\alpha_{n},\overline{F}_n)$, where, the \textit{posterior indices}, $\alpha_{n}$, and $\overline{F}_{n}$ are obtained as follows~\citep{ferguson1973bayesian,ghosal2010dirichlet},
\begin{align} \label{def:DP-posteriors}
\alpha_{n} = \alpha + n, \, \overline{F}_n = \frac{n}{\alpha+n}{F}_n + \frac{\alpha}{\alpha+n}F_0 
\end{align}
\noindent Here ${F}_n$ is the \textit{empirical  distribution function} given $X_1,. . .,X_{n}$, i.e., $F_{n}(x) = \frac{1}{n}\sum_{i=1}^{n}\mathbb{I}(X_{i} \le x)$. 

Note how the posterior index, $\overline{F}_n$,  exhibited in Eq.~\ref{def:DP-posteriors} combines the information from observations (via the empirical cdf, $F_{n}(x)$ ) with that available from the prior (using $F_{0}$). This is a crucial property of DPs that our algorithm , DPPS, shall harness in order to account for information obtained via  observed data, and the prior information.  One can easily see that as $\alpha \to 0$, DPs can only account for uncertainty obtained via observations, with no role of prior anymore, and we discuss this next. 
\paragraph{Bayesian Bootstrap}  A very useful idea in statistical inference has been that of Statistical Bootstrap~\citep{efron1994introduction}, and a Bayesian version of Bootstrap was introduced in ~\citep{rubin1981bayesian}. Interestingly, this Bayesian version of Bootstrap can also be derived as a special case of the DP posteriors~\citep{ghosal2017fundamentals}. Specifically, the weak limit, $\DP(n,\sum_{i=1}^{n}\delta_{X_{i}})$, (also referred to as the \textit{noninformed limit} sometimes) of the DP posterior, $\DP(\alpha_{n},\overline{F}_n)$, as $|\alpha|\to 0$ is known as Bayesian Bootstrap (BB), and is given as,
\begin{align}\label{def:Bayesian-Bootstrap}
\mathrm{BB_{n}}:= \DP(n,\sum_{i=1}^{n}\delta_{X_{i}}) = \sum_{i=1}^{n} W_{i}\delta_{X_{i}}
\end{align}
where ${\bf W}_n=(W_{1},...,W_{n}) \sim \Dir (1,...,1)$, and $X_{i}$ are the observed data points. The mean of a random distribution drawn from Bayesian-Bootstrap can be easily seen to be the dot-product between the weights and the observed data-points, i.e.,
\begin{align}\label{def: BB-mean}
\mu(BB_{n}) = \sum_{i=1}^{n}W_{i}X_{i} = \left<{\bf W}_n,{\bf X}_n\right>
\end{align}
As we shall see in Sec~\ref{sec:DPPS}, the idea of Bayesian Bootstrap forms the basis for a  bandit algorithm introduced in~\citep{riou2020bandit}. Next we discuss an important representation of DP priors/posteriors that make them amenable to practical applications.

\paragraph{Stick-breaking representation of DPs} With the necessary details about DP prior and posterior distributions set, one naturally asks how to draw sample from these distributions because this is necessary if one wants to do any sort of statistical inference using DPs. Particularly, the form of DP posterior (indices) in Eq.\ref{def:DP-posteriors} provide little information to answer this question. A representation of random measures sampled from DPs, reported in ~\citep{sethuraman1994constructive}, known as Stick Breaking representation of DPs, provides an answer to this question. 
In general, Stick-breaking measures~\citep{ishwaran2001gibbs} are almost surely discrete random probability measures that can be represented as,
\begin{align}\label{def:Stick-Breaking-priors}
{Q}(\cdot) = \sum_{i=1}^{N}q_{i}\delta_{Z_{i}}(\cdot)
\end{align}
where $\delta_{Z_{i}}$ is a discrete measure concentrated at $Z_{i}$, and $q_{i}$  are random weights, generated independent of $Z_{i}$, such that $q_{i}\in [0,1]$, and $\sum_{i=1}^{N}q_{i} = 1$. As one can guess, this is analogous to breaking an actual stick into pieces, and hence the name. In a seminal paper,  \cite{sethuraman1994constructive} reported  that if these weights, $q_{i}$, are constructed  such that,
\begin{align} 
\label{def:SB-DP}
q_{1}&= V_{1}, \: (q_{i})_{i=2}^{N-1}= V_{i}\prod_{j=1}^{i-1}(1-V_{j}), \: q_{N}= \prod_{i=1}^{N}(1-V_{i})\\ \label{def:SB-DP-3}
V_{i}& \stackrel{iid}{\sim}\texttt{Beta}(1,\alpha), \: Z_{i} \stackrel{iid}{\sim} F,\: i=1,2,...N
\end{align} 
and $N$ is $\infty$, then the generated random discrete measure, $P$, in Eq.\ref{def:SB-DP} (with $N$ as $\infty$) is such that, $P \sim \DP(\alpha,F)$. Ofcourse, for computation one can't have $N$ as $\infty$, and the infinite series is truncated at some finite $N$, such that a probability mass, $q_{N} = 1 - \sum_{i=1}^{N-1} q_{i} = \prod_{i=1}^{N}(1-V_{i})$, is put at the last point, $Z_{N}$, and this construction ensures that all weights, $q_{i}$ sum up to one.  This finite Stick-breaking representation has been widely used~\citep{ishwaran2001gibbs,muliere1998approximating} thanks to its provable optimality in closely approximating the infinite series (see also Appendix Section~\ref{sec:stick-breaking-finite} for this and for more details on choosing finite $N$, etc).

\paragraph{Iterative form of DP posterior} With the stick-breaking representation of DP priors at hand, one wonders how to sample from DP posteriors (i.e. $\DP(\alpha_{n},\overline{F}_n)$) in a practically feasible way and, for this, an iterative form of DP posterior~\citep{blackwell1973ferguson,sethuraman1994constructive} comes in handy, and is given as follows,
\begin{align}\label{eq:single-step-posterior}
Q_{i}(\cdot) \stackrel{d}{=} V_{i}\delta_{X_{i-1}} + (1-V_{i})Q_{i-1}(\cdot)
\end{align}
Here $V_{i} \sim \texttt{Beta}(1,\alpha + i)$, and $\stackrel{d}{=}$ denotes equality in distribution. Beginning with a random measure sampled from the DP prior, $Q_{0}$, generated using the stick-breaking method (Eqs.\ref{def:SB-DP}-\ref{def:SB-DP-3}),
the recursion in Eq.\ref{eq:single-step-posterior} is simply applied $N$ times (corresponding to $N$ observations $\{X_{1},. . ., X_{N}\}$) to obtain a random measure, $Q_{N}$, sampled from the DP posterior (i.e. $\DP(\alpha_{n},\overline{F}_n)$), 
\begin{align}
\label{eq:multi-step-posterior}
Q_{N} \stackrel{d}{=} V_{N}\delta_{X_{N}} + \sum_{i=1}^{N-1}\left[V_{i}\prod_{j=i+1}^{N}(1-V_{j})\right]\delta_{X_{i}}  + \left[\prod_{i=1}^{N}(1-V_{i})\right] Q_{0}.
\end{align}

\section{Dirichlet process posterior sampling} \label{sec:DPPS}

Having established the necessary background, we are now ready to introduce our algorithm, DPPS.  

Algorithm~\ref{alg:DPPS} gives the pseudo-code for DPPS. Instead of assuming a \emph{parametric} class for the reward generating distribution of each arm, and then putting a prior on the parameter, we model the reward generating distribution of each of the arms $\{p_{k}\}_{k=1}^{K}$ using a corresponding DP. In each round, DPPS operates as follows:  a random distribution, $D_{k}$, is sampled from the current DP posterior for each of the $K$ arms utilizing the stick-breaking representation of the DP posterior of Eq.~\ref{eq:multi-step-posterior}; To select an arm, the probability matching principle is followed, that is, the arm with the highest probability of being optimal (i.e. one corresponding to the highest  of the means, $\mu(D_{k})$, of the random measures, $D_{k}$) in that round is pulled. It is denoted as $I(t)$. After observing the reward $R_{t,I(t)}$, 
the history of observed rewards, ${\bf R}_{I(t)}$, for this arm is updated, and the DP posterior of the pulled arm is updated  using the $N_{I(t)}$ observations.  Clearly, DPPS can be seen as Thompson sampling wherein the prior/posterior are nonparametric, instead of parametric\footnote{Note that DPPS is a (non-parametric) Bayesian algorithm that utilizes probability-matching principle for arm selection, and hence is in \textit{exact} sense, Thompson sampling.}. 
As a result, most of the theoretical guarantees and proof techniques for Thompson-sampling apply to DPPS as well. 
An important practical advantage of DPPS is that one does not need to know the parametric-class of distribution functions. More crucially,  the posteriors in parametric Thompson-sampling are often not available in exact form, and must be approximated using expensive inference techniques. This issue does not arise in DPPS, as the resulting posteriors in DPPS are always DP, and one can sample from DP posteriors utilizing their stick-breaking representation discussed in Section~\ref{sec:Background-DP}. Also,  DPPS enjoys the same flexibility as that of DP posteriors  in utilizing information obtained from the observed data and that from some prior knowledge. In other words it  combines the (data-driven) strength of vanilla (Bayesian) Bootstrapping with  the flexibility of incorporating prior beliefs.  

\begin{algorithm}[h]
\caption{Dirichlet Process Posterior Sampling}\label{alg:DPPS}
\begin{algorithmic}[1]
\Require Horizon $T$, number of arms $K$, arm parameters -- Distribution $F_{0,k}$, constant $\alpha_{0,k}$ for $k \in \{1,. . .,K\}$

\For{$k=1. . .K$,}
\State Set ${\bf R}_{k}=\left[\;\right]$, $F_{k}=F_{0,k}$, $\alpha_{k}=\alpha_{0,k}$, and $N_{k}=0$
\EndFor

\For{$t = 1...T$,}
\State \textcolor{blue}{\textit{\#} Sample model (a random measure):}
\For{$k=1. . .K$,}
\State Sample $D_{k}\sim \DP(\alpha_{k},{F}_{k})$
\EndFor
\State \textcolor{blue}{\textit{\#} select and apply action:}

\State $I(t) = \argmax_{k\in\{1,...,K\}}\{\mu(D_{k})\}$
\State Pull arm $I(t)$ and observe reward $R_{t,I(t)}$
\State Update history ${\bf R}_{I(t)} = ({\bf R}_{I(t)}^{\top}, R_{t,I(t)})^{\top}$ and count $N_{I(t)} \leftarrow N_{I(t)} + 1$.
\State \textcolor{blue}{\textit{\#} Posterior update}

\State $\alpha_{I(t)} \leftarrow \alpha_{I(t)} + 1$
\State $F_{I(t)} = \frac{1}{\alpha_{I(t)}}\sum_{x\in{\bf R}_I(t)} \delta_{x} + \frac{\alpha_{0,I(t)}}{\alpha_{I(t)}}F_{0,I(t)}$
\EndFor
\end{algorithmic}
\end{algorithm}

\subsection{Noninformative limit of the DPPS}
\label{subsec:NPTS-DPPS}

In \cite{riou2020bandit}, authors introduced a non-parametric algorithm for multi-arm bandits, calling it Non-Parametric Thompson Sampling (NPTS), although noting that NPTS is not a Bayesian algorithm, and that it is not Thompson sampling in \textit{strict} sense. They proved its asymptotic optimality, and showed empirically that NPTS also does well non-asymptotically. Algorithm~\ref{alg:NPTS} gives the pseudo-code for NPTS. In what follows, we show that NPTS is a special case of DPPS. In NPTS, the arms are selected in each-round (see lines 9-10 in Algorithm~\ref{alg:NPTS}) based on the argmax of the weighted average of the observed rewards (weights drawn from a Dirichlet distribution). Interestingly, this is exactly the mean of a random distribution drawn from a Bayesian-Bootstrap (Eq.~\ref{def: BB-mean}), and Bayesian-Bootstrap is a special case of Dirichlet-processes (see Eq. \ref{def:Bayesian-Bootstrap}). Therefore, NPTS is a special case of DPPS, when the DP for each arm is taken to be the Bayesian-Bootstrap, and cannot account for prior knowledge (following our discussion in Section~\ref{sec:Background-DP} on Bayesian Bootstrap and DP posteriors).

\section{Numerical experiments} \label{sec:Numerics}

In this section, we exhibit empirical performance of DPPS on challenging Bernoulli bandit, Beta bandit, and a real-world agriculture dataset. In the experiments that follow, all regret plots exhibit average regret over 200 independent runs and $10\%-90\%$ quantile levels. For Bernoulli bandits we compare DPPS with Beta-Bernoulli Thompson sampling and UCB. Whereas for the other two environemnts we compare with UCB and a generalized version of Beta/Bernoulli~\citep{agrawal2013thompson} TS because it's difficult to implement usual parametric Thompson sampling in those settings (especially for the DSSAT bandit setting). Impressive performance of DPPS in a Gaussian bandit environment (with both mean and variance unknown to the algorithmic agent) is also shown in Sec.~\ref{sec:DPPS-Gaussian-bandit}. A discussion on the general choice of (hyper)parameters of DP priors ($\alpha$,$F_{0}$, and truncation level of DP prior) is given in Section~\ref{sec:DPPSHyperparameters}.  Corresponding code is provided in the supplementary material, and details on computational costs of DPPS are discussed in Section~\ref{sec:computational-complexity-DPPS-NPTS}. All simulations  were generated using the Python 3 language, and made use of the NumPy \citep{harris2020array}, SciPy \citep{virtanen2020scipy}, and Matplotlib \citep{hunter2007matplotlib} libraries.

\paragraph{Bernoulli and Beta bandits} Here we evaluate DPPS in a 6 arm Bernoulli bandit setting with means [0.3, 0.4, 0.45, 0.5, 0.52, 0.55].  Note that all means being close to 0.5 makes it a challenging setting. We compare performance
of DPPS with UCB and another algorithm which is tailor-made for Bernoulli bandit environment -- Beta/Bernoulli Thompson Sampling (TS). The prior for Beta/Bernoulli TS is set as
Beta(1,1) (uniform). The base measure of the DP prior is also set as Uniform distribution ($\texttt{Beta}(1,1)$) for all the arms. Fig.~\ref{fig:DPPS-UCB-TS-BernoulliBandit} shows the perfomance of all the algorithms. Clearly, DPPS does as well as Beta/Bernoulli TS. This is impressive because unlike Beta/Bernoulli TS , DPPS is unaware of the parametric class of the reward distribution (Bernoulli), and still performed as well as Beta/Bernoulli TS. With the same DP priors we also run DPPS in a Beta bandit environment (with same mean as the Bernoulli bandit setting and scale factor of 5). Fig.~\ref{fig:DPPS-UCB-TS-BernoulliBandit} (right) also shows performance of DPPS in this setting, and clearly  DPPS outperforms other baselines.

\begin{figure}[h]
\centering
\includegraphics[width=0.49\linewidth]{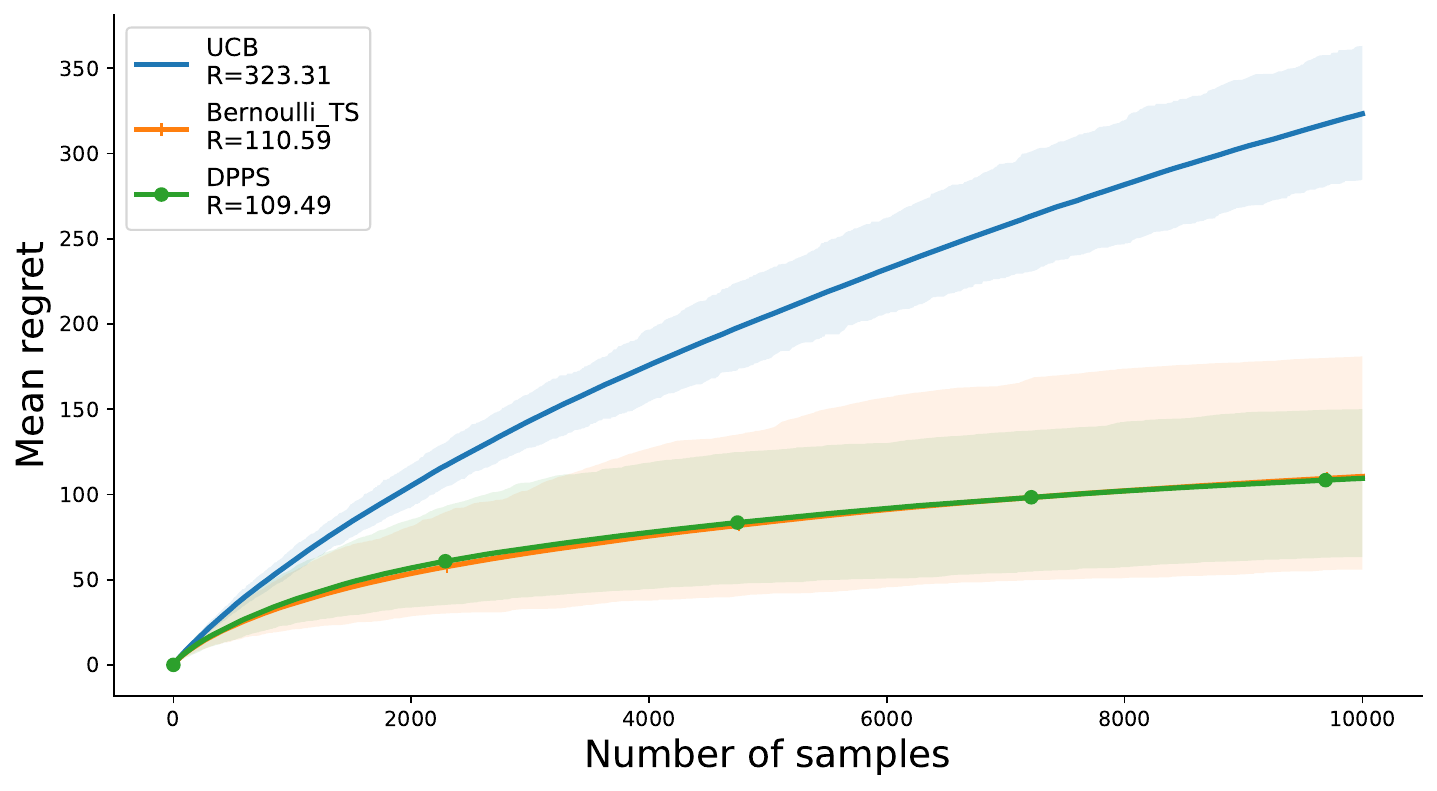}
\includegraphics[width=0.49\linewidth]{ 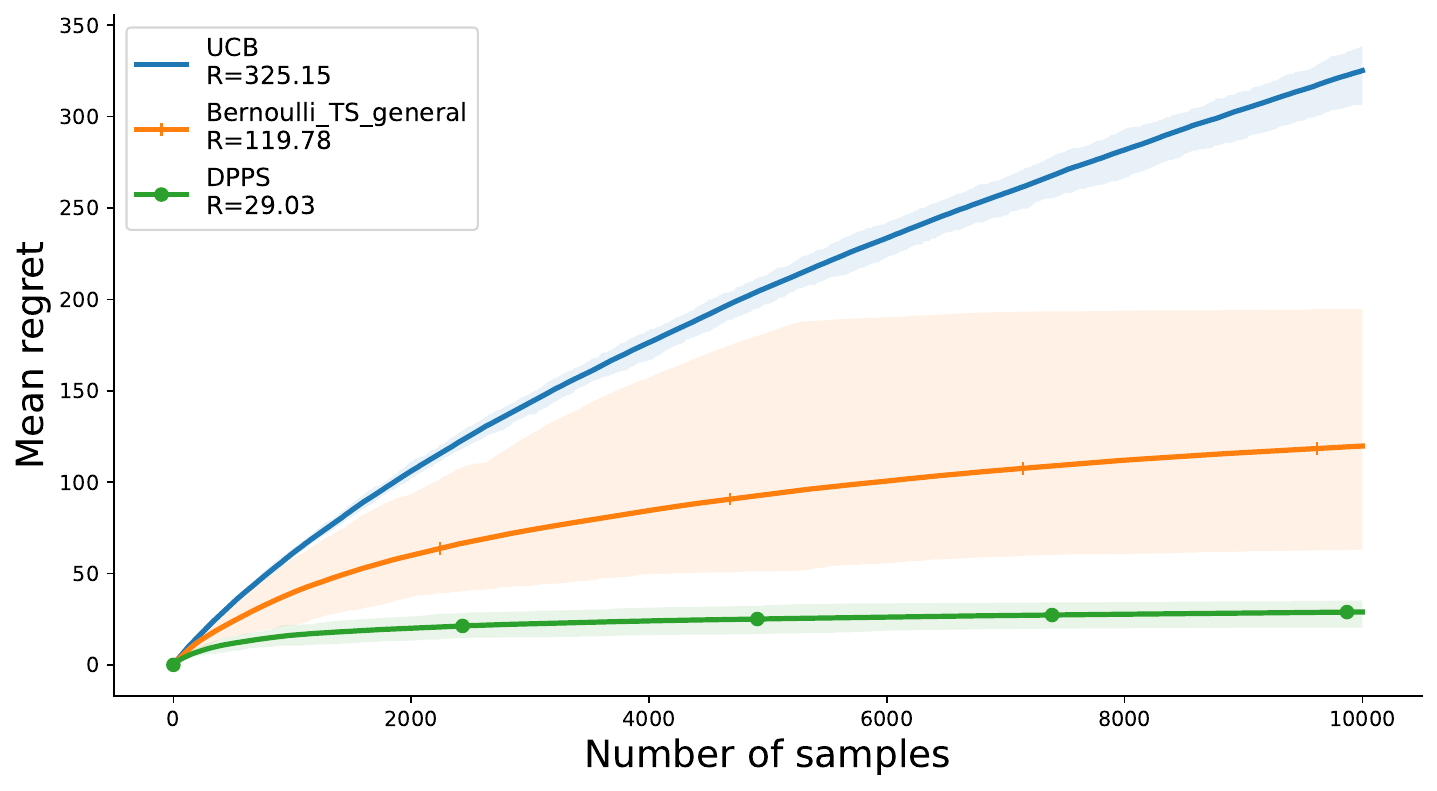}
\caption{Comparison of average regret in the Bernoulli bandit setting (left), and Beta Bandit setting (right) discussed in the text. }
\label{fig:DPPS-UCB-TS-BernoulliBandit}
\end{figure}

\paragraph{DSSAT bandits} Next, we illustrate the performance of DPPS on a challenging practical decision-making problem using the DSSAT-2 (Decision Support System for Agrotechnology Transfer) simulator ~\citep{hoogenboom2019dssat,gautron2022gym}. Harnessing more than 30 years of expert knowledge, this simulator is calibrated on historical field data (soil measurements, genetics, planting dates, etc) and generates realistic crop yields. Such simulations can be used to explore crop management policies in silico before implementing them in the real world, where their actual effect may take months or years to manifest themselves. More specifically, we model the problem of selecting a planting date for maize grains among 7 possible options, all other factors being equal, as a 7-armed bandit. The resulting distributions incorporate historical variability as well as exogenous randomness coming from a stochastic meteorologic model. In Figure~\ref{fig:generaalizedTS-DPPS-DSSAT}, we show distributions of crop yields generated from the DSSAT2 simulator. Note that these distributions are  right-skewed, multimodal and exhibit a peak at zero corresponding to years of poor harvest. Given this, they hardly fit to a convenient parametric model (e.g single-parameter-exponential-family, etc). Note that the distributions have bounded support and hence can be normalized to within $[0,1]$. Like for the Bernoulli bandit case, we use DP priors with uniform base measures ($\texttt{Beta}(1,1)$) for DPPS.

\begin{figure}[ht]
\centering
\includegraphics[width=0.49\linewidth]{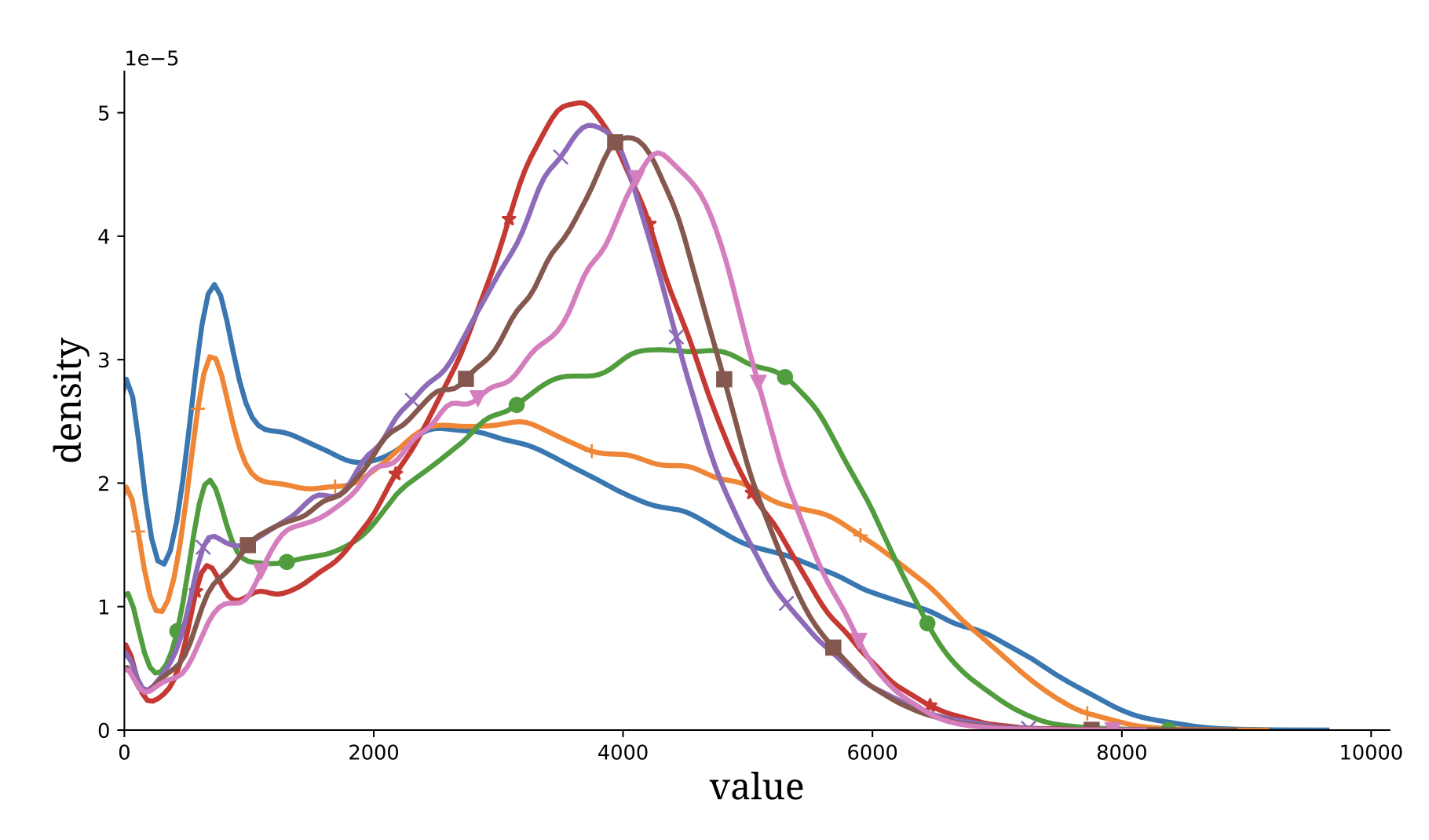}
\includegraphics[width=0.49\linewidth]{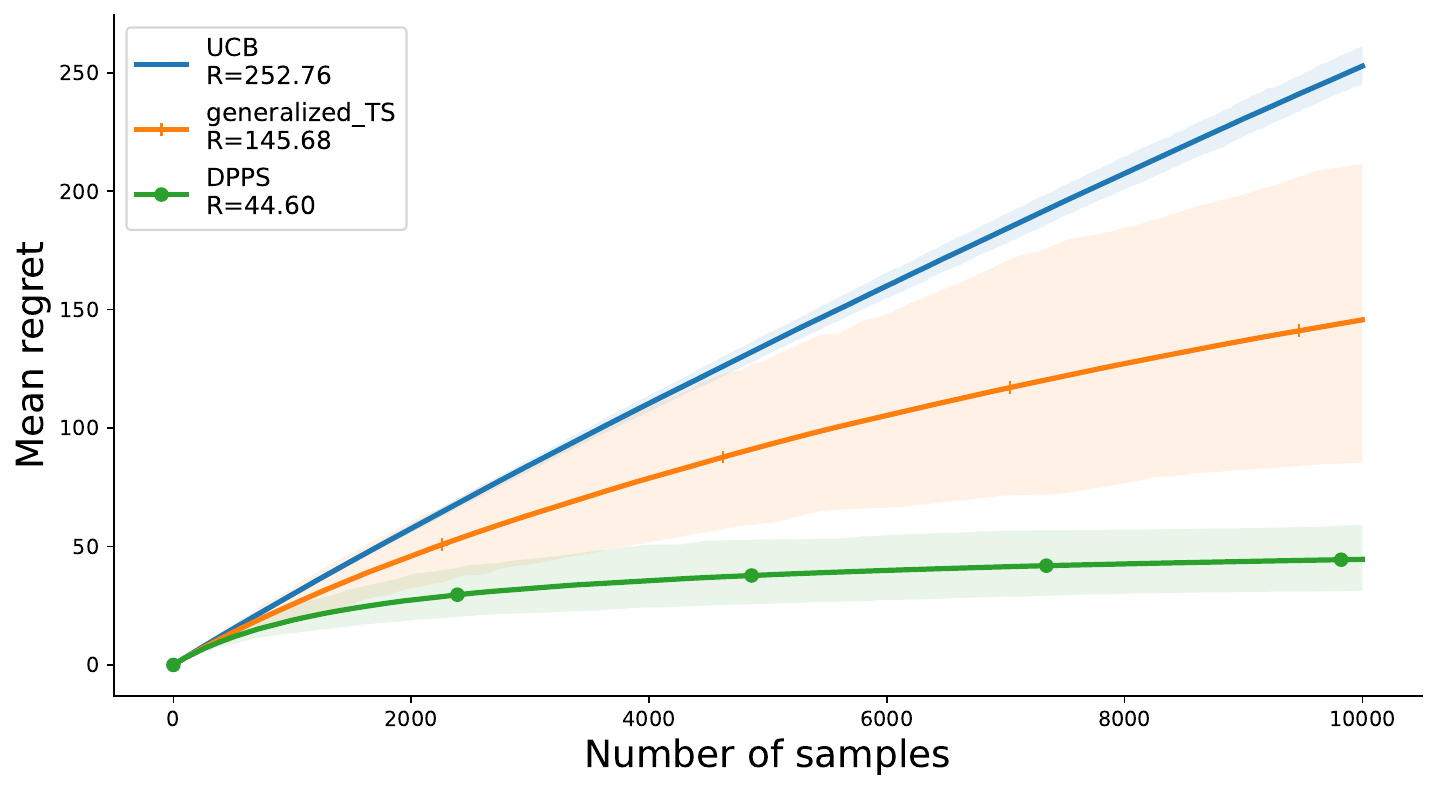}
\caption{Reward distributions from DSSAT simulator (left) and regret performances of bandit strategies (right) in the DSSAT environment.}
\label{fig:generaalizedTS-DPPS-DSSAT}
\end{figure}

Since a vanilla version of Thompson sampling is no longer feasible for DSSAT environment, we instead compare DPPS against a version of Beta/Bernoulli Thompson sampling, introduced in \cite{agrawal2013thompson}, that is adapted for general stochastic rewards based on a Bernoulli trial in each round with the obtained rewards as the mean parameter of the Bernoulli random variable. The same $\texttt{Beta}(1,1)$ prior is used for generalized TS as well. Fig.\ref{fig:generaalizedTS-DPPS-DSSAT} clearly shows DPPS outperforming generalized TS and UCB by a huge margin, and this example highlights the strength of DPPS as Bayesian nonparametric algorithm over it's closest parametric-counterpart of generalized TS. Note that so far we used agnostic base measures for the DP priors ($\texttt{Beta}(1,1)$), i.e. these base measures (and hence the corresponding DP priors) do not convey any special knowledge about the bandit environment. However, DPPS allows for encoding this prior knowledge about the bandit environment through base-measures of the DP priors, and we illustrate this next using a simple example.

\begin{figure}[h]
\centering
\includegraphics[width=0.55\linewidth]{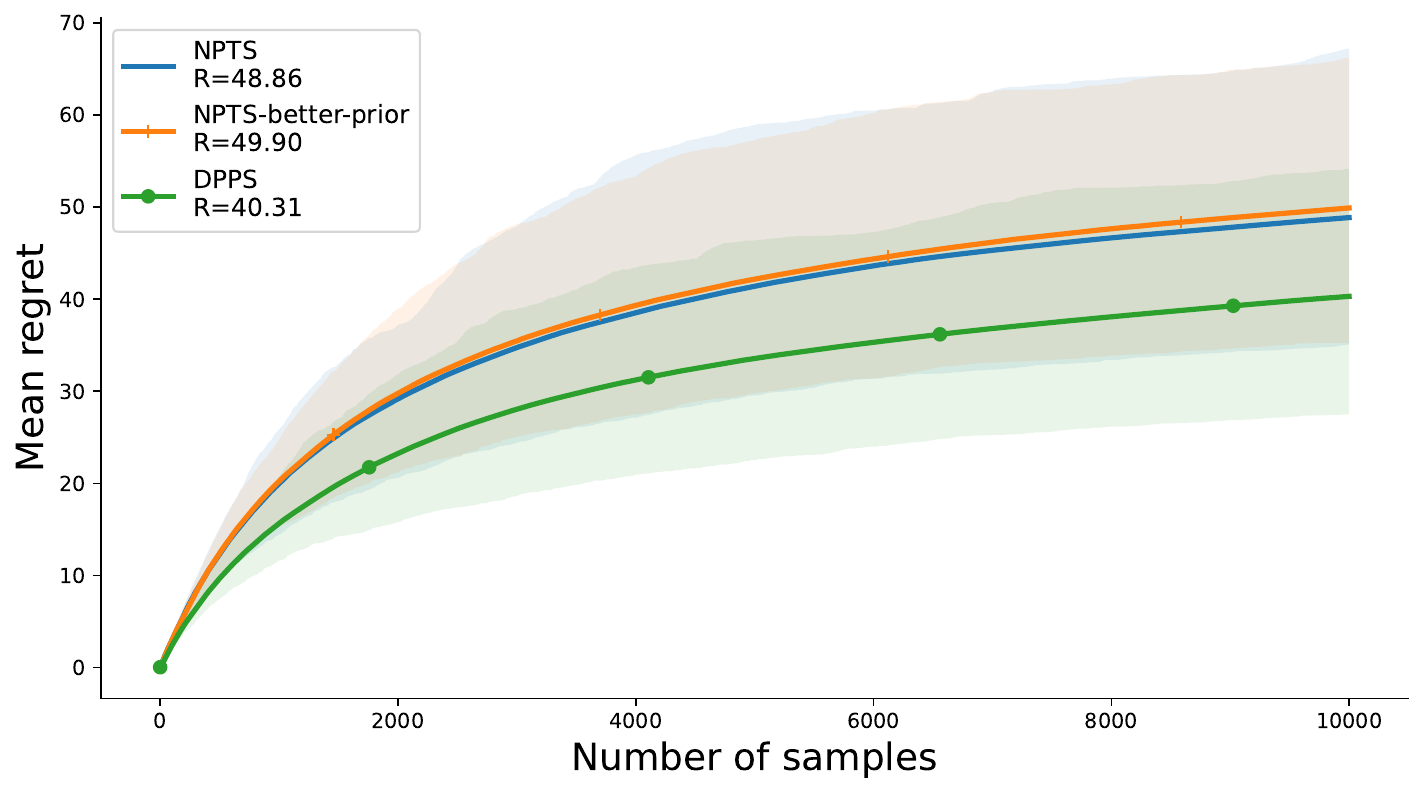}
\caption{Average regret in the DSSAT bandit environment with beneficial priors for both NPTS and DPPS.}
\label{fig:NPTS-DPPS-better-priors}
\end{figure}

\paragraph{Incorporating prior knowledge through DPPS} 
Recall from Sec.~\ref{subsec:NPTS-DPPS} that NPTS is a special case of DPPS in the Bayesian Bootstrap limit of the DP prior. Therefore, the base measure for NPTS for a particular arm is empirical CDF of the reward distributions based on current observations for that arm, beginning with some \textit{pseudo-rewards/artificial-history} for each of the k-arms. Given that the base measure is an empirical CDF, in NPTS, it's not possible to utilize even some first order prior information about the bandit environment that may be available. This is, however, possible in general cases of DPPS through the continuous base measures of DP priors. This can be clearly exhibited through a  simple example. We start DPPS with a more informed choice of priors, i.e. instead of  $\texttt{Beta}(1,1)$ base measure for the DP priors for all the arms, we express more confidence in the third (optimal) arm by using $\texttt{Beta}(1,0.1)$ as base measure for this arm. We compare this with a version of NPTS that starts with pseudo-rewards of $X_{k} = 0.01$ for all but the third arm (for which it uses a value of 1).  Fig.~\ref{fig:NPTS-DPPS-better-priors} confirms better performance of DPPS with this choice of DP priors, and no change in performance of NPTS even with initial condition that heavily favors the third arm.

\section{Information theoretic analysis of DPPS} \label{sec:Theoretcial-analysis}
In Section~\ref{sec:DPPS}, we saw that, like Thompson sampling, DPPS is a probability matching algorithm. In this section, we  utilize this property of DPPS to derive upper bounds on its Bayesian-regret. Particularly, we generalize the information theoretic analysis of Thompson sampling introduced in \cite{russo2016information} to a wider class of probability matching algorithms, and show that DPPS is encapsulated in that generalization.  We begin by summarizing the key-steps in the analysis of ~\cite{russo2016information} and also include complete proofs for the sake of completion in Sec.~\ref{sec:Proofs}. 

\subsection{Information theoretic analysis of Thompson sampling}

\cite{russo2016information} introduced an elegant framework for deriving upper bounds on Bayesian regret of Thompson sampling.  Firstly, the Bayesian regret is re-expressed in terms of the entropy of the posterior distribution of optimal action, and an upper bound on \textit{information ratio}, 

\begin{thm}\citep{russo2016information}\label{fact:Bayesian-regret-information-ratio}
For any $T \in N$, provided that $\Gamma_{t} \le \Gamma$ almost surely for each $t \in {1,..,T}$, $\E\left[\mathrm{Regret}(T,\pi^{TS})\right] \le \sqrt{\Gamma \mathbb{H}(\alpha_{1}) T}$.
\end{thm}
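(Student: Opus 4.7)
The plan is to follow the two-step information-theoretic decomposition of Russo and Van Roy. The crux is that for Thompson sampling, the \emph{information ratio}
\[
\Gamma_t \;=\; \frac{\bigl(\E_t[R_{t,A^\star} - R_{t,A_t}]\bigr)^2}{I_t\bigl(A^\star\,;\,(A_t,R_{t,A_t})\bigr)}
\]
quantifies how much expected per-round regret the algorithm incurs per unit of information gained about $A^\star$. I would begin by rewriting this identity and immediately invoking the hypothesis $\Gamma_t \le \Gamma$ to obtain the per-round bound
\[
\E_t[R_{t,A^\star} - R_{t,A_t}] \;\le\; \sqrt{\,\Gamma \cdot I_t\bigl(A^\star\,;\,(A_t,R_{t,A_t})\bigr)\,}.
\]

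Next, I would sum over $t=1,\ldots,T$, take a total expectation, and apply the Cauchy--Schwarz inequality to convert the sum of square roots into a square root of a sum:
\[
\E\!\left[\mathrm{Regret}(T,\pi^{TS})\right] \;=\; \sum_{t=1}^{T}\E\bigl[\E_t[R_{t,A^\star} - R_{t,A_t}]\bigr] \;\le\; \sqrt{\Gamma\, T}\cdot\sqrt{\,\E\!\Bigl[\textstyle\sum_{t=1}^T I_t\bigl(A^\star\,;\,(A_t,R_{t,A_t})\bigr)\Bigr]\,}.
\]

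The third step is purely information-theoretic: the chain rule for mutual information telescopes the conditional information gains,
\[
\sum_{t=1}^T \E\bigl[I_t\bigl(A^\star\,;\,(A_t,R_{t,A_t})\bigr)\bigr] \;=\; I\bigl(A^\star\,;\,\mathcal{H}_{T+1}\bigr),
\]
and the mutual information of $A^\star$ with any random object is bounded by the marginal Shannon entropy of $A^\star$, which under the prior $\alpha_1$ is exactly $\mathbb{H}(\alpha_1)$. Substituting this into the Cauchy--Schwarz bound yields the claim $\E[\mathrm{Regret}(T,\pi^{TS})] \le \sqrt{\Gamma\,\mathbb{H}(\alpha_1)\,T}$.

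The main obstacle is not algebraic but conceptual: one must be careful that the information ratio is legitimately defined in terms of $(A_t, R_{t,A_t})$ (the only quantity observed in round $t$) rather than in terms of $A^\star$ itself, and that the telescoping chain-rule identity is applied with the correct conditioning on $\mathcal{H}_t$ so that each term matches the $I_t$ appearing in the Cauchy--Schwarz step. Once the information ratio is set up consistently with the data-processing filtration generated by $\mathcal{H}_t$, the remaining ingredients (Cauchy--Schwarz and the chain rule) are routine, and the probability-matching identity $\pi_t(a)=\alpha_t(a)$ specific to Thompson sampling is only needed if one additionally wishes to exhibit a numerical value of $\Gamma$; the statement itself treats the bound on $\Gamma_t$ as an assumption.
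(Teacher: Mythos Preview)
Your proposal is correct and follows essentially the same route as the paper: rewrite the per-round regret via the information-ratio definition, pull out $\sqrt{\Gamma}$, apply Cauchy--Schwarz over $t$ (together with Jensen) to get $\sqrt{T\,\E\sum_t I_t}$, and then use the chain rule for mutual information plus nonnegativity of conditional entropy to bound the total information gain by $\H(\alpha_1)$. The paper's proof is organized identically, with the only cosmetic difference being that it writes the Cauchy--Schwarz step in one chain rather than first isolating the per-round inequality.
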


The information ratio, $\Gamma_{t} := \frac{(\E_t[R_{t,A_*} - R_{t,a}])^{2}}{I_{t}(A^\star;R_{t,a})}$, is defined as the  ratio of the square of the instantaneous expected regret by choosing action $a$ to the instantaneous \textit{information gain} about optimal action $A^\star$ if action $a$ is chosen. Clearly, bounding Bayesian regret of an algorithm then boils down to bounding the information-ratio of that algorithm. Particularly, for Thompson-sampling,  in $\sigma$-sub-Gaussian reward noise bandit setting, one obtains the following bound,

\begin{lem}\citep{russo2016information}\label{fact:Information-ratio-MAB-for-sub-gaussian-rewards}
\[
\Gamma_{t} \le 2|\mathcal{A}|\sigma^{2}.
\]
\end{lem}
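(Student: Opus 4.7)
My plan is to mimic the Russo--Van Roy template: reshape the numerator of the information ratio via probability matching plus Cauchy--Schwarz, lower bound the denominator via a subgaussian refinement of Pinsker's inequality, and let the two $\alpha_t(a)^2$ weights cancel. Using $\pi_t(a)=\alpha_t(a)$ together with the conditional independence of $A_t$ from the rewards given $\hist$, I would first rewrite the instantaneous regret as
$$\E_t\bigl[R_{t,A^\star}-R_{t,A_t}\bigr] \;=\; \sum_{a\in\A} \alpha_t(a)\,\bigl(\E_t[R_{t,a}\mid A^\star=a] - \E_t[R_{t,a}]\bigr).$$
An application of Cauchy--Schwarz in the form $(\sum_a u_a)^2\le|\A|\sum_a u_a^2$ with $u_a = \alpha_t(a)\bigl(\E_t[R_{t,a}\mid A^\star=a]-\E_t[R_{t,a}]\bigr)$ then yields the upper bound $|\A|\sum_a \alpha_t(a)^2 d_a^2$ on the squared numerator, where I abbreviate $d_a := \E_t[R_{t,a}\mid A^\star=a]-\E_t[R_{t,a}]$.

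For the denominator, I would expand the mutual information using the chain rule. Because $A_t$ is randomised according to $\pi_t=\alpha_t$ conditionally independently of the per-arm rewards given $\hist$,
$$I_t(A^\star; R_{t,A_t}) \;=\; \sum_a \alpha_t(a)\, I_t(A^\star; R_{t,a}) \;=\; \sum_{a,a'} \alpha_t(a)\,\alpha_t(a')\, \KL\!\bigl(\Prob_t(R_{t,a}\in\cdot\mid A^\star=a')\,\big\|\,\Prob_t(R_{t,a}\in\cdot)\bigr).$$
Keeping only the diagonal terms $a'=a$ and invoking the subgaussian KL inequality $\KL(P\|Q)\ge (\mu_P-\mu_Q)^2/(2\sigma^2)$ --- which follows from the Donsker--Varadhan variational formula $\KL(P\|Q)=\sup_\lambda\{\E_P[\lambda R]-\log\E_Q[e^{\lambda R}]\}$ evaluated at $\lambda=(\mu_P-\mu_Q)/\sigma^2$, combined with the subgaussian bound $\log\E_Q[e^{\lambda R}]\le \lambda\mu_Q+\lambda^2\sigma^2/2$ --- produces $I_t(A^\star;R_{t,A_t}) \ge (2\sigma^2)^{-1}\sum_a \alpha_t(a)^2 d_a^2$. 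Dividing this into the numerator bound makes $\sum_a \alpha_t(a)^2 d_a^2$ cancel exactly and delivers $\Gamma_t\le 2|\A|\sigma^2$.

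The main obstacle is applying the subgaussian KL inequality with the correct reference measure. The assumed $\sigma$-subgaussianity holds for $R_{t,a}$ conditional on the true reward-vector distribution $p^\star$, whereas the marginal $\Prob_t(R_{t,a}\in\cdot)$ is a Bayesian mixture over the posterior of $p^\star$ and need not itself be $\sigma$-subgaussian as a single law with a well-defined log-MGF bound. The clean way around this is to apply the Donsker--Varadhan lower bound conditionally on $p^\star$, where subgaussianity is immediate, and then lift it to the posterior-averaged laws via joint convexity of $\KL$ together with Jensen's inequality applied to the quadratic integrand $\lambda(\mu_P-\mu_Q)-\lambda^2\sigma^2/2$. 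A subsidiary but essential point is the probability matching identity $\pi_t(a)=\alpha_t(a)$, which is precisely what causes the $\alpha_t(a)^2$ factors to appear symmetrically in both bounds and thus cancel in the ratio; this is also the only place where the Thompson-type policy structure enters the argument, which is what makes the bound carry over verbatim to any probability-matching algorithm -- in particular to DPPS.
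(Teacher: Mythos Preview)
Your argument is correct and matches the paper's proof: probability matching to rewrite the instantaneous regret, Cauchy--Schwarz on the resulting sum, and the Donsker--Varadhan subgaussian bound (the paper's Lemma~\ref{fact:KL-var-X}) to link squared mean gaps to $\KL$, after which the double sum over $(a,a^\star)$ is identified with $I_t(A^\star;(A_t,R_{t,A_t}))$. The only cosmetic difference is that the paper, after Cauchy--Schwarz, reinstates the nonnegative off-diagonal terms $\alpha_t(a)\alpha_t(a^\star)\bigl(\E_t[R_{t,a}\mid A^\star=a^\star]-\E_t[R_{t,a}]\bigr)^2$ in the \emph{numerator} and bounds each by $2\sigma^2\KL$ to obtain $2|\A|\sigma^2 I_t$ directly, whereas you equivalently drop the off-diagonal $\KL$ terms from the \emph{denominator} and divide---and the paper simply asserts the posterior-predictive subgaussianity you flag in your last paragraph, without any conditioning-on-$p^\star$ workaround.
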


This bound when combined with Theorem~\ref{fact:Bayesian-regret-information-ratio} and upper bound of $\log K$ for entropy of any posterior distribution of optimal action leads to the following bound on the Bayesian regret of Thompson sampling,

\begin{thm}\citep{russo2016information}
\label{thm:TS-Bayesian-regret}
\[\E\left[\mathrm{Regret}(T,\pi^{TS})\right] \le \sigma\sqrt{2{K}(\log{K})T}\,,\]
\end{thm}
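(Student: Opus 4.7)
The plan is to prove this theorem by directly combining the two preceding results in the excerpt, namely the Bayesian-regret-via-information-ratio theorem and the $2|\mathcal{A}|\sigma^{2}$ bound on the information ratio, with the standard worst-case bound on the entropy of a distribution supported on a finite set.

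First, I would invoke the lemma $\Gamma_{t} \le 2|\mathcal{A}|\sigma^{2} = 2K\sigma^{2}$ to conclude that the information ratio is bounded almost surely and uniformly in $t$ by $\Gamma := 2K\sigma^{2}$. This allows me to apply the theorem $\E[\mathrm{Regret}(T,\pi^{TS})] \le \sqrt{\Gamma\, \mathbb{H}(\alpha_{1})\, T}$ with this explicit value of $\Gamma$, yielding
\[
\E\left[\mathrm{Regret}(T,\pi^{TS})\right] \;\le\; \sqrt{2K\sigma^{2}\, \mathbb{H}(\alpha_{1})\, T}.
\]

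Second, I would bound the Shannon entropy $\mathbb{H}(\alpha_{1})$ of the prior distribution over the optimal action $A^\star$. Since $A^\star \in \mathcal{A}$ and $|\mathcal{A}| = K$, any probability distribution on $\mathcal{A}$ has entropy at most $\log K$, achieved by the uniform distribution; this follows from concavity of $-x\log x$ together with Jensen's inequality, or equivalently from non-negativity of $\KL(\alpha_{1} \| \mathrm{Uniform}(\mathcal{A}))$. Thus $\mathbb{H}(\alpha_{1}) \le \log K$.

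Finally, substituting this entropy bound into the previous display gives
\[
\E\left[\mathrm{Regret}(T,\pi^{TS})\right] \;\le\; \sqrt{2K\sigma^{2}\, \log K\, T} \;=\; \sigma\sqrt{2TK\log K},
\]
which is the claimed bound. There is essentially no serious obstacle here: the real work was already done in establishing the information-ratio framework (Theorem on Bayesian regret via the information ratio) and in proving the $2K\sigma^{2}$ bound on $\Gamma_{t}$ for sub-Gaussian rewards. This final theorem is the clean corollary obtained by plugging in the worst-case entropy of the prior on $A^\star$, and the only minor care needed is to check that $\log K$ is indeed the correct scaling factor for a distribution on $K$ atoms (not $\ln 2$ or $\log_{2}$) so that the constants in the final bound match $\sigma\sqrt{2TK\log K}$ exactly.
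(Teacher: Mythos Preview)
Your proposal is correct and follows exactly the approach the paper takes: it explicitly states that combining Theorem~\ref{fact:Bayesian-regret-information-ratio} with Lemma~\ref{fact:Information-ratio-MAB-for-sub-gaussian-rewards} and the worst-case entropy bound $\mathbb{H}(\alpha_1)\le \log K$ yields Theorem~\ref{thm:TS-Bayesian-regret}. There is nothing to add; the theorem is presented in the paper precisely as the corollary you describe.
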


The proof of Lemma~\ref{fact:Information-ratio-MAB-for-sub-gaussian-rewards} hinges on two crucial steps, and we highlight those, referring the reader to Sec.~\ref{sec:Proofs} for other details. First, re-writing of the instantaneous per-step Bayesian regret by utilizing the probability matching property of Thompson sampling, $\mathbb{P}_{t}(A^\star=a)= \mathbb{P}_{t}(A_{t}=a)$, as follows,
\begin{align}\label{Eq:Regret-decomposition}
\E_{t} \left[ R_{t,A^\star} - R_{t,A_{t}}\right] &= \sum_{a\in \A} \Prob_{t}(A^\star=a) \E_{t}\left[R_{t,a} | A^\star=a \right]- \sum_{a\in \A}\Prob_{t}(A_{t}=a)\E_{t}[ R_{t,a} | A_t=a] \ \\
&= \sum_{a\in \A} \Prob_{t}(A^\star=a)\left( \E_{t}\left[R_{t,a} | A^\star=a \right] - \E_{t}[ R_{t,a}]\right). \nonumber
\end{align} 

Second, bounding this instantaneous per-step regret by bounding $\left( \E_{t}\left[R_{t,a} | A^\star=a \right] - \E_{t}[ R_{t,a}]\right)$, This is done by an application of the variational formula~\citep{cover1999elements} for the KL divergence, ${\KL}(P||Q)$, between two absolutely continuous measures, $P$ and $Q$,
\begin{fact}\citep{cover1999elements}
\label{fact:KL-var}
\[ \KL(P||Q)=\mathrm{sup}_{X}\{\E_{P}[X] - \log\E_{Q}[\mathrm{exp}\{X\}]\}. \]
\end{fact}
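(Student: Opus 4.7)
The plan is to prove the Donsker–Varadhan variational representation via a sandwich argument: upper-bound the functional $X \mapsto \E_P[X] - \log \E_Q[e^X]$ by $\KL(P||Q)$ uniformly over admissible $X$, then exhibit a specific $X^\star$ that attains this bound.

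For achievability, I would assume $P \ll Q$ (the case $P \not\ll Q$ makes both sides $+\infty$ and is handled separately) and plug in the log-likelihood ratio $X^\star = \log(dP/dQ)$. By definition of KL divergence, $\E_P[X^\star] = \KL(P||Q)$; and $\E_Q[e^{X^\star}] = \E_Q[dP/dQ] = \int dP = 1$, so $\log \E_Q[e^{X^\star}] = 0$. Hence the supremum is at least $\KL(P||Q)$.

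For the matching upper bound, given any measurable $X$ with $0 < \E_Q[e^X] < \infty$, I would introduce the tilted (Gibbs) measure $Q_X$ defined by $dQ_X/dQ = e^X / \E_Q[e^X]$, which is automatically a probability measure. A direct manipulation then yields
\begin{equation*}
\E_P[X] - \log \E_Q[e^X] \;=\; \E_P\bigl[\log(dQ_X/dQ)\bigr] \;=\; \KL(P||Q) - \KL(P||Q_X),
\end{equation*}
where the second equality rewrites $\log(dQ_X/dQ) = \log(dP/dQ) - \log(dP/dQ_X)$ using the chain rule for Radon–Nikodym derivatives. Since $\KL(P||Q_X) \ge 0$ by Gibbs' inequality (a one-line consequence of Jensen's inequality applied to the convex function $-\log$), the left-hand side is bounded above by $\KL(P||Q)$, with equality exactly when $P = Q_X$, i.e.\ when $X = \log(dP/dQ)$ up to an additive constant (which cancels between the two terms).

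The main technicality is measure-theoretic bookkeeping rather than any genuine obstacle: one must restrict to $X$ for which both $\E_P[X]$ and $\E_Q[e^X]$ make sense, treat the degenerate case $P \not\ll Q$ (where choosing $X$ proportional to an indicator of a set on which $dQ/dP$ vanishes drives the right-hand side to $+\infty$ as well), and extend from bounded to unbounded $X$ by truncation and monotone/dominated convergence. The conceptual content of the fact is simply the Fenchel-conjugate relationship between the convex functional $P \mapsto \KL(P||Q)$ on the space of probability measures and the log-moment-generating functional $X \mapsto \log \E_Q[e^X]$, and the computation above is precisely the direct realization of that duality.
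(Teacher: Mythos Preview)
Your argument is correct and is the standard Gibbs-tilting proof of the Donsker--Varadhan variational formula: the sandwich between the log-likelihood-ratio plug-in and the identity $\E_P[X] - \log\E_Q[e^X] = \KL(P\|Q) - \KL(P\|Q_X)$ is exactly right, and your handling of the degenerate cases is adequate at this level of rigor.

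There is, however, nothing to compare against in the paper itself. The statement is presented as a \emph{Fact} cited from \cite{cover1999elements} and is not proved anywhere in the text or the appendix; the paper simply invokes it as an off-the-shelf result to establish Lemma~\ref{fact:KL-var-X}. So your proposal is not an alternative to the paper's proof but rather a self-contained justification of a result the authors chose to import. If anything, your write-up could serve as a useful footnote for readers unfamiliar with the variational characterization of $\KL$.
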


If we substitute, the random variable, $X \equiv X(t) = R_{t,a} -\E_{t}[R_{t,a}]$, i.e. the instantaneous reward noise, with $P = \Prob_{t}(R_{t,a} | A^\star=a)$ and $Q = \Prob_{t}(R_{t,a})$ in the above variational formula, and when $X(t)$ is $\sigma$-sub-Gaussian, it's easy to obtain the following bound,

\begin{lem}\citep{russo2016information}\label{fact:KL-var-X}
\[
\E_{t}\left[R_{t,a} | A^\star=a \right] - \E[R_{t,a}] \le \sigma \sqrt{2\KL(\Prob_{t}(R_{t,a} | A^\star=a)||\Prob_t(R_{t,a}))}. \]
\end{lem}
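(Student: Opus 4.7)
The plan is to plug a one-parameter family of test functions into the variational formula of Fact~\ref{fact:KL-var}, bound the resulting log-moment generating function via sub-Gaussianity of the reward noise, and finally optimize the free parameter to recover the square-root dependence on $\KL$.

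Concretely, I would fix a scalar $\lambda>0$ and apply the variational representation with $P=\Prob_{t}\left(R_{t,a}\in\cdot\mid A^{\star}=a\right)$, $Q=\Prob_{t}\left(R_{t,a}\in\cdot\right)$, and the test function $X=\lambda\left(R_{t,a}-\E_{t}[R_{t,a}]\right)$. Since this choice is admissible in the supremum defining $\KL(P\Vert Q)$, Fact~\ref{fact:KL-var} yields
\begin{equation*}
\KL(P\Vert Q)\;\ge\;\lambda\Bigl(\E_{t}[R_{t,a}\mid A^{\star}=a]-\E_{t}[R_{t,a}]\Bigr)\;-\;\log\E_{t}\!\left[\exp\!\bigl(\lambda(R_{t,a}-\E_{t}[R_{t,a}])\bigr)\right].
\end{equation*}
The second step uses the hypothesis that the centered reward $R_{t,a}-\E_{t}[R_{t,a}]$ is $\sigma$-sub-Gaussian under $\Prob_{t}$, which controls the log-MGF by $\lambda^{2}\sigma^{2}/2$. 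Writing $\Delta:=\E_{t}[R_{t,a}\mid A^{\star}=a]-\E_{t}[R_{t,a}]$ and rearranging,
\begin{equation*}
\Delta\;\le\;\frac{\KL(P\Vert Q)}{\lambda}\;+\;\frac{\lambda\sigma^{2}}{2}.
\end{equation*}

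The final step is a standard one-dimensional minimization: the right-hand side, viewed as a function of $\lambda>0$, is minimized at $\lambda^{\star}=\sqrt{2\KL(P\Vert Q)/\sigma^{2}}$, and substituting this value gives $\Delta\le\sigma\sqrt{2\KL(P\Vert Q)}$, which is precisely the claimed inequality.

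The main obstacle, rather than anything in the algebra, is verifying that the sub-Gaussian hypothesis really applies to the posterior-predictive distribution $Q=\Prob_{t}(R_{t,a}\in\cdot)$ rather than only to the conditional law given the true reward model $p^{\star}$. In the setup of Section~\ref{sec:Setup-notation} the reward noise is sub-Gaussian given $p^{\star}$; marginalizing over the posterior on $p^{\star}$ introduces a mixture over possibly different means, which in general need not be sub-Gaussian. The cleanest way to close this gap is to assume (as implicitly done in \cite{russo2016information}) that sub-Gaussianity of the centered predictive holds under $\Prob_{t}$, or to strengthen the assumption on $\raisebox{2pt}{$\chi$}$ so that the mixture is $\sigma$-sub-Gaussian as well. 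Once that point is granted, the combination of Fact~\ref{fact:KL-var} with the sub-Gaussian log-MGF bound and the optimal choice of $\lambda$ yields the lemma in a few lines.
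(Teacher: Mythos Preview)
Your proof is correct and follows essentially the same route as the paper: apply the Donsker--Varadhan variational formula (Fact~\ref{fact:KL-var}) with the centered reward as test function, bound the log-MGF under $Q=\Prob_{t}(R_{t,a}\in\cdot)$ via $\sigma$-sub-Gaussianity, and optimize the free parameter $\lambda$ to obtain the square-root bound. Your closing remark about whether sub-Gaussianity genuinely holds for the posterior-predictive $Q$ (rather than only conditionally on $p^{\star}$) is a valid subtlety that the paper, following \cite{russo2016information}, also leaves implicit.
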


Substituting the result of Lemma~\ref{fact:KL-var-X} in Eq.~\ref{Eq:Regret-decomposition}, and utilizing the definition of information gain, $I_{t}(A^\star;R_{t,a})$, and information ratio, $\Gamma_{t}$,  yields the bound in Lemma~\ref{fact:Information-ratio-MAB-for-sub-gaussian-rewards}.

\subsection{A generalization of the information theoretic analysis in \cite{russo2016information} } \label{subsec:Admissible-ProbMatch}

\paragraph{Choice of $\Prob_{t}(A^\star=a)$} It's easy to notice in the preceding analysis (specifically Eq.~\ref{Eq:Regret-decomposition}) that in the analysis of \cite{russo2016information} there's explicitly no restriction on  $\Prob_{t}(A^\star=a)$ for it to be derived using a Bayes-rule based posterior-distributions  of arm-rewards, $\Prob_{t}(R_{t,a})$, as is done in parametric Thompson sampling.  However, this choice is rather implicit, given the decision theoretic and information theoretic \textit{coherency} of {Bayesian framework}~\citep{wald1961statistical,zellner1988optimal}. 
Moreover, Bayesian-framework is not limited to Bayes-rule based derivation of posterior distributions. Another \textit{valid} Bayesian approach~\citep{orbanz2009construction,ghosal2017fundamentals} for obtaining posteriors is leveraging the property of \emph{conjugacy} as discussed in Sec~\ref{sec:Background-DP}.  In particular, most {nonparametric} priors do not satisfy the necessary conditions for Bayes rule (See section~\ref{subsec:Bayes_rule}), and one relies on their conjugacy property to derive the corresponding posteriors. An elegant way to characterize the class of {valid} Bayesian approaches is based on martingales of probability measures, and we refer the interested reader to  papers concerning \textit{predictive-Bayes} ~\citep{holmes2023statistical, fong2023martingale, fortini2025exchangeability} for details.

\paragraph{Admissible probability matching algorithms} From the preceding discussion, we can conclude that all probability matching algorithms which derive  $\Prob_{t}(R_{t,a})$ (and hence $\Prob_{t}(A^{*}=a)$) using a valid Bayesian approach are \textit{admissible} in the information theoretic analysis of \cite{russo2016information}. Additionally, these admissible algorithms would enjoy same bounds  on their information-ratio and (consequently) Bayesian regret as those for parametric Thompson sampling (i.e. Lemma~\ref{fact:Information-ratio-MAB-for-sub-gaussian-rewards} and Theorem~\ref{thm:TS-Bayesian-regret}), if they satisfy certain  \textit{auxiliary conditions} required from the analysis. For the case of $\sigma$-sub Gaussian reward noise discussed before, we list these next.

\paragraph{Auxiliary conditions} It is easy to see that we require the following auxiliary conditions for an admissible probability matching algorithm to enjoy same bounds on Bayesian regret as those for parametric Thompson sampling in \citep{russo2016information}: In each round $t$, 
\begin{enumerate}
\item The instantaneous reward noise, $X(t)$, in Lemma~\ref{fact:KL-var-X}, is a $\sigma$-sub-Gaussian random variable,   
\item $\KL(\Prob_{t}(R_{t,a} | A^\star=a)||\Prob_t(R_{t,a}))$ in Lemma~\ref{fact:KL-var-X} is well defined.

\end{enumerate}

The second condition holds if $\Prob_{t}(A^\star=a)>0$ owing to a classical fact in conditional probability, 

\begin{fact}~\citep{williams1991probability}
\label{fact:abs-continuity}
For any random variable $Z$ and event $E \subset \Omega$, where $\Omega$ is the probability space, if $\Prob_{t}(E) = 0$, then $\Prob_{t}(E|Z) = 0$ almost surely. Conversely, for any $x \in \mathcal{X}$ with $\Prob_{t}(X = x) > 0$, $\Prob_{t}(Y|X = x)$ is absolutely continuous with respect to $\Prob_{t}(Y)$.
\end{fact}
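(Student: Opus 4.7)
Both halves of the fact are classical and reduce to one-line arguments built, respectively, on the tower property and on the elementary quotient definition of conditional probability. Since the conditioning on $\hist$ is held fixed throughout, I would just work under the probability measure $\Prob_{t}$; no argument below depends on this cosmetic reduction. The whole statement is really a lemma on regular conditional probability rather than anything bandit-specific, so the proof should be short.

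For the first half, my plan is to start from the tower identity $\Prob_{t}(E) = \E_{t}\!\left[\Prob_{t}(E \mid Z)\right]$, which is just the law of total expectation applied to $\mathbf{1}_{E}$. If $\Prob_{t}(E) = 0$, then the non-negative random variable $\Prob_{t}(E \mid Z)$ has zero expectation under $\Prob_{t}$, and hence must vanish $\Prob_{t}$-almost surely. The only mildly subtle point is the implicit choice of a regular version of the conditional probability, but the almost-sure conclusion is version-independent, so this does not really cause an obstacle.

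For the second half, the hypothesis $\Prob_{t}(X = x) > 0$ places us in the elementary (non-null) branch of conditional probability, so I can write directly
\[
\Prob_{t}(Y \in B \mid X = x) \;=\; \frac{\Prob_{t}\!\left(\{Y \in B\} \cap \{X = x\}\right)}{\Prob_{t}(X = x)}.
\]
To establish absolute continuity with respect to $\Prob_{t}(Y)$, I would take an arbitrary measurable $B$ with $\Prob_{t}(Y \in B) = 0$ and apply monotonicity of measure to bound the numerator: $\Prob_{t}\!\left(\{Y \in B\} \cap \{X = x\}\right) \le \Prob_{t}(Y \in B) = 0$. The ratio is therefore zero, which is exactly the defining condition for $\Prob_{t}(Y \mid X = x) \ll \Prob_{t}(Y)$. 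The positivity of $\Prob_{t}(X=x)$ is precisely what allows me to divide, so there is no further obstruction; if the event had zero mass one would need a much more delicate disintegration argument, but that case is explicitly excluded by the hypothesis.
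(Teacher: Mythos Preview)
Your argument is correct on both halves: the tower property plus non-negativity gives the first implication, and the elementary ratio definition plus monotonicity gives the second. Note, however, that the paper does not supply its own proof of this statement---it is recorded as a cited fact from \cite{williams1991probability} and invoked without derivation---so there is nothing to compare your route against; your write-up simply fills in what the paper leaves to the reference.
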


Next, we show that DPPS is an admissible probability matching algorithm that satisfies the auxiliary conditions listed above.

\subsection{Bayesian regret of DPPS}
That DPPS is an {admissible} probability matching algorithm is easy to see: DPPS utilizes a valid Bayesian approach, i.e. conjugacy of DP priors/posteriors, to derive $\Prob_{t}(A^\star=a)$; Also DPPS satisfies both the auxiliary conditions discussed in previous sub-section;  For condition (2), clearly,  $\Prob_{t}(A^\star=a) > 0$ for DPPS,  whenever the base measure, $F_{0}$, of the DP prior (and hence of the corresponding DP posterior), $\DP(\alpha,F_{0})$, is non-null. For condition (1), the following property of the tail of random measures sampled from DP priors/posteriors ensures $\sigma$-sub-Gaussian nature of the instantaneous reward noise, $X(t)$, whenever the base measure, $F_{0}$, of the DP prior, $\DP(\alpha,F_{0})$, is $\sigma$-sub-Gaussian,

\begin{fact}[\cite{doss1982tails}] Let $F \sim \DP(\alpha, F_{0})$, then almost surely the tails of $F$ and distributions sampled from the DP posterior of $F$, $\DP(\alpha + n, \overline{F_{n}})$, given samples $X_{1},. . ., X_{n}$, are dominated by (and are much smaller than) the tails of $F_{0}$.
\end{fact}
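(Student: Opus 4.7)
The plan is to exploit the stick-breaking representation of $F$ reviewed in Section~\ref{sec:Background-DP}, which writes $F = \sum_{i=1}^\infty q_i \delta_{Z_i}$ with $Z_i \stackrel{iid}{\sim} F_0$ independent of the weights $(q_i)$ built from $V_i \stackrel{iid}{\sim} \texttt{Beta}(1,\alpha)$ and satisfying $\sum_i q_i = 1$ almost surely. Since every atom of $F$ is itself a draw from $F_0$, the support of $F$ is almost surely contained in that of $F_0$, and the tail of $F$ is the random series $F((x,\infty)) = \sum_i q_i \I\{Z_i > x\}$, which is what must be compared to $F_0((x,\infty))$.

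I would first handle the comparison in expectation: by Fubini and the independence of $(q_i)$ from $(Z_i)$, $\E[F((x,\infty))] = \bigl(\sum_i \E[q_i]\bigr)\Pr(Z_1 > x) = F_0((x,\infty))$, so on average the DP draw matches $F_0$'s tail exactly. To upgrade this to the almost sure domination (and the ``much smaller than'' qualifier), I would (i) truncate the stick-breaking series at a level $N(x)$ chosen so that the remainder weight $\sum_{i>N(x)} q_i = \prod_{j=1}^{N(x)}(1-V_j)$ is negligible compared to $F_0((x,\infty))$ as $x \to \infty$, using the almost sure geometric decay $\prod_{j\le i}(1-V_j) \to 0$ that follows from $\E[\log(1-V_1)] = -1/\alpha < 0$ and the strong law; and (ii) apply a Borel--Cantelli argument to the events $\{\exists i \le N(x): Z_i > x\}$ using the sub-Gaussian bound $\Pr(Z_1 > x) \le e^{-x^2/(2\sigma^2)}$ for the base measure. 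The two steps together yield $\limsup_{x\to\infty} F((x,\infty))/F_0((x,\infty)) < \infty$ almost surely, which is the precise meaning of the Fact.

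The posterior case $\DP(\alpha+n,\overline{F}_n)$ then follows by replaying the argument with the mixture base measure $\overline{F}_n = \tfrac{n}{\alpha+n}F_n + \tfrac{\alpha}{\alpha+n}F_0$: for any $x$ larger than the observed $X_1,\dots,X_n$ the empirical component $F_n$ contributes zero to the tail, so atoms of the posterior beyond $x$ arise only from the $F_0$ component and are pre-scaled by $\alpha/(\alpha+n) < 1$; the prior's sub-Gaussian tail is therefore inherited, with at worst the same constant. The main obstacle is the almost sure step rather than the expectation calculation: the weights $(q_i)$ are only stochastically (not deterministically) small, so coupling the decay rate of $\sum_{i>N(x)} q_i$ to that of $F_0((x,\infty))$ requires the quantitative truncation above, which is the technical heart of \cite{doss1982tails}. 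Should that coupling prove delicate, a cleaner backup is the self-similarity identity $F \stackrel{d}{=} V_1 \delta_{Z_1} + (1-V_1) F'$ with $F' \sim \DP(\alpha,F_0)$ independent of $(V_1,Z_1)$, which yields a distributional recursion for the tail functional amenable to a direct Chernoff/moment analysis.
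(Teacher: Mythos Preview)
The paper does not prove this statement: it is quoted as a Fact from \cite{doss1982tails} and invoked without argument as an input to Theorem~\ref{thm:Bayesian-regret-DPPS}. There is therefore no in-paper proof to compare your proposal against.

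On the substance of your sketch, two gaps are worth flagging. First, the conclusion you aim for, $\limsup_{x\to\infty} F((x,\infty))/F_0((x,\infty)) < \infty$ almost surely, captures only the ``dominated by'' half of the claim and not the ``much smaller than'' half; the Doss--Sellke theorem actually gives that this ratio tends to zero (with a quantitative rate), and your truncation-plus-Borel--Cantelli outline does not reach that stronger conclusion as written, since once $N(x)\to\infty$ with $x$ the union event $\{\exists\, i\le N(x): Z_i>x\}$ need not have summable probability and the surviving atoms can carry weight comparable to $F_0((x,\infty))$. Second, step~(ii) imports a sub-Gaussian hypothesis on $F_0$, but the Fact as stated---and as proved in \cite{doss1982tails}---carries no such assumption; the result holds for an arbitrary base measure. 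Your self-similarity backup is closer in spirit to what a full proof requires and would be the more promising route.
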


Therefore, DPPS is an admissible probability matching algorithm that satisfies the necessary auxiliary conditions under specific constraints on the DP priors, and hence whenever those constraints on the DP priors are satisfied, DPPS  enjoys the same upper-bounds on Bayesian regret as those for parametric Thompson sampling. More concretely,

\begin{thm}\label{thm:Bayesian-regret-DPPS}
For the setting of $\sigma$-sub-Gaussian arm reward distributions, starting with a DP-prior, $\DP(\alpha, F_{0})$, with $F_{0}$ as a $\sigma$ sub-Gaussian distribution, the Bayesian regret of DPPS satisfies,
\[\E\left[\mathrm{Regret}(T,\pi^{DPPS})\right] \le \sigma\sqrt{2{K}(\log{K})T}\,,\]
\end{thm}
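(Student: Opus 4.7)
The plan is to cast this as a direct corollary of the generalized information-theoretic framework developed in Section~\ref{subsec:Admissible-ProbMatch}, so that the bulk of the work reduces to verifying that DPPS falls within the class of admissible probability matching algorithms and satisfies the two auxiliary conditions listed there. First I would note that DPPS is indeed a probability matching algorithm by construction (the action is drawn according to the posterior probability of optimality induced by the per-arm DP posteriors), and that the posteriors are derived via a valid Bayesian mechanism, namely the conjugacy of Dirichlet Processes as recalled in Eq.~\ref{def:DP-posteriors}. Hence DPPS is admissible in the sense of Section~\ref{subsec:Admissible-ProbMatch}, so the regret decomposition in Eq.~\ref{Eq:Regret-decomposition} applies verbatim.

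Next I would verify the two auxiliary conditions. For condition~(2), the base measure $F_{0}$ being non-null implies that the random measure $D_{k}\sim\DP(\alpha_{k},F_{k})$ sampled inside DPPS assigns strictly positive probability to every arm achieving the largest mean, so $\Prob_{t}(A^{\star}=a)>0$ for every $a\in\A$, and Fact~\ref{fact:abs-continuity} gives the absolute continuity needed for $\KL(\Prob_{t}(R_{t,a}\mid A^{\star}=a)\,\|\,\Prob_{t}(R_{t,a}))$ to be well-defined. For condition~(1), I would invoke the Doss tail bound (the fact cited just before the theorem): since the DP posterior $\DP(\alpha+n,\overline{F}_{n})$ has tails dominated almost surely by those of the base measure $F_{0}$, and $F_{0}$ is $\sigma$-sub-Gaussian by hypothesis, the reward distribution sampled from each per-arm posterior is itself $\sigma$-sub-Gaussian. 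Consequently the instantaneous reward noise $X(t)=R_{t,a}-\E_{t}[R_{t,a}]$ appearing in Lemma~\ref{fact:KL-var-X} is $\sigma$-sub-Gaussian, as required.

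With both auxiliary conditions in place, I would apply the chain of arguments exactly as in the parametric Thompson sampling case. Substituting Lemma~\ref{fact:KL-var-X} into the regret decomposition Eq.~\ref{Eq:Regret-decomposition}, applying Cauchy--Schwarz across arms, and recognizing $\sum_{a}\Prob_{t}(A^{\star}=a)\KL(\Prob_{t}(R_{t,a}\mid A^{\star}=a)\,\|\,\Prob_{t}(R_{t,a}))$ as the information gain $I_{t}(A^{\star};R_{t,A_{t}})$ via the probability matching identity $\Prob_{t}(A_{t}=a)=\Prob_{t}(A^{\star}=a)$, yields the information ratio bound $\Gamma_{t}^{\mathrm{DPPS}}\le 2K\sigma^{2}$ of Lemma~\ref{fact:Information-ratio-MAB-for-sub-gaussian-rewards}. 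Plugging this uniform-in-$t$ bound into Theorem~\ref{fact:Bayesian-regret-information-ratio}, together with the elementary entropy inequality $\bH(\alpha_{1})\le\log K$, gives
\[
\E[\mathrm{Regret}(T,\pi^{\mathrm{DPPS}})]\le\sqrt{2K\sigma^{2}\cdot\log K\cdot T}=\sigma\sqrt{2K(\log K)T},
\]
which is the desired bound.

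The only genuinely non-routine point will be checking that the tail-domination from the Doss fact transfers cleanly to $\sigma$-sub-Gaussianity of the reward-noise random variable $X(t)$ under the \emph{posterior} distribution at every round $t$, uniformly in the history $\hist$. Doss' result controls the tails of the random measure itself, but condition~(1) requires a sub-Gaussian bound on a random variable whose law is a mixture (over the posterior over measures) of sub-Gaussian laws; I would verify that a mixture of $\sigma$-sub-Gaussian distributions remains $\sigma$-sub-Gaussian by a direct MGF calculation, so that the sub-Gaussianity of $F_{0}$ propagates to the posterior predictive reward, closing the argument.
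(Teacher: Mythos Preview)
Your proposal mirrors the paper's argument step for step: admissibility via DP conjugacy, condition~(2) from non-nullity of $F_{0}$ together with Fact~\ref{fact:abs-continuity}, condition~(1) from the Doss--Sellke tail fact, and then the parametric-TS bounds (Lemma~\ref{fact:Information-ratio-MAB-for-sub-gaussian-rewards}, Theorem~\ref{fact:Bayesian-regret-information-ratio}, and $\bH(\alpha_{1})\le\log K$) carried over unchanged. This is exactly how the paper proceeds.

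The one place to be careful is your final paragraph. The assertion that a mixture of $\sigma$-sub-Gaussian laws remains $\sigma$-sub-Gaussian is false in general, so the direct MGF calculation you propose will not go through: an equal mixture of $N(-M,\sigma^{2})$ and $N(M,\sigma^{2})$ has centered MGF $e^{\sigma^{2}t^{2}/2}\cosh(Mt)>e^{\sigma^{2}t^{2}/2}$ for every $M>0$. You have correctly located the only delicate step, but your proposed fix would not close it. The paper, for its part, does not spell this step out either---it simply cites the Doss--Sellke tail fact and asserts condition~(1)---so at the level of detail the paper actually provides, your argument and the paper's coincide.
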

where the expectation is taken over the randomness in the policy and the prior of the environment.  
\section{Conclusions and Perspectives}\label{sec:Discussions}

In this paper, we introduced a Bayesian non parametric algorithm based on Dirichlet processes, DPPS, for multi-arm bandits that combines the strength of (Bayesian) Bootstrap with a principled mechanism of incorporating and exploiting prior information about the bandit environment. DPPS enjoys similar optimality guarantees on Bayesian regret as parametric Thompson sampling, and among other advantages of DPPS over its parametric counterpart is its \textit{flexibility}. This is because the stick-breaking implementation of DPPS introduced in this paper can be used for different types of bandit environments, contrary to parametric Thompson sampling  whose implementations differ according to the bandit environment, and can easily lead to intractable posteriors (except for a few special cases) which need to be approximated using approximate inference schemes such as MCMC, variational inference, etc, and, if not done carefully, such approximate-inference based Thompson sampling has been shown to incur sub-optimal performance, even in simple settings~\citep{phan2019thompson}. Next, we discuss a few research directions. 

Firstly, we point that DPs are not the only Bayesian nonparametric priors on the space of distribution functions,   and further generalization of DPPS is possible.  For example, other probability matching algorithms using Pitman-Yor~\citep{pitman2006bessel} processes and P{\'o}lya-Tree priors ~\citep{castillo2017polya,castillo2024bayesian} can be useful generalizations of DPPS. Next, we consider DPPS as a generic \textit{design principle}, based on Bayesian non-parametric statistics, that can be extended to the setting of Markov Decision Processes as well, especially in the model-free scenario. Particularly, it seems that Bootstraped-DQN~\citep{osband2016deep} can be interpreted as a DP based algorithm in the noninformative limit of the DP posteriors (i.e. Bayesian Bootstrap), and it would be interesting to extend it  to a full-fledged DP implementation to account for any prior information~\citep{osband2018randomized} in a principled manner similar to that shown in this paper. Dirichlet Processes may also prove to be useful in efficiently representing value distributions in distributional reinforcement learning~\citep{bellemare2023distributional}. 
We leave these intriguing research questions and extensions for future work.




\section*{Acknowledgments}

This work has been supported by the French Ministry of Higher Education and Research, the Hauts-de-France region, Inria, the MEL, the I-Site ULNE regarding project RPILOTE-19-004-APPRENF, the French National Research Agency under PEPR IA FOUNDRY project (ANR-23-PEIA-0003). S.Vashishtha is partially supported by the ANR program “AI PhD@Lille”.  We thank Emilie Kaufmann and other members of our group (SCOOL) for encouraging discussions. S. Vashishtha thanks Isma{\"e}l Castillo for their lectures on Bayesian nonparametric statistics at the  51st Saint Flour Probability summer school, July 2023.  


\begin{thebibliography}{59}
\providecommand{\natexlab}[1]{#1}
\providecommand{\url}[1]{\texttt{#1}}
\expandafter\ifx\csname urlstyle\endcsname\relax
  \providecommand{\doi}[1]{DOI: #1}\else
  \providecommand{\doi}{DOI: \begingroup \urlstyle{rm}\Url}\fi

\bibitem[Abbasi-Yadkori \& Szepesv{\'a}ri(2011)Abbasi-Yadkori and Szepesv{\'a}ri]{abbasi2011regret}
Yasin Abbasi-Yadkori and Csaba Szepesv{\'a}ri.
\newblock Regret bounds for the adaptive control of linear quadratic systems.
\newblock In \emph{Proceedings of the 24th Annual Conference on Learning Theory}, pp.\  1--26. JMLR Workshop and Conference Proceedings, 2011.

\bibitem[Agrawal(1995)]{agrawal1995sample}
Rajeev Agrawal.
\newblock Sample mean based index policies by o (log n) regret for the multi-armed bandit problem.
\newblock \emph{Advances in applied probability}, 27\penalty0 (4):\penalty0 1054--1078, 1995.

\bibitem[Agrawal \& Goyal(2013)Agrawal and Goyal]{agrawal2013thompson}
Shipra Agrawal and Navin Goyal.
\newblock Thompson sampling for contextual bandits with linear payoffs.
\newblock In \emph{International conference on machine learning}, pp.\  127--135. PMLR, 2013.

\bibitem[Alquier et~al.(2024)]{alquier2024user}
Pierre Alquier et~al.
\newblock User-friendly introduction to pac-bayes bounds.
\newblock \emph{Foundations and Trends{\textregistered} in Machine Learning}, 17\penalty0 (2):\penalty0 174--303, 2024.

\bibitem[Auer et~al.(2002)Auer, Cesa-Bianchi, and Fischer]{auer2002finite}
Peter Auer, Nicolo Cesa-Bianchi, and Paul Fischer.
\newblock Finite-time analysis of the multiarmed bandit problem.
\newblock \emph{Machine learning}, 47:\penalty0 235--256, 2002.

\bibitem[Baransi et~al.(2014)Baransi, Maillard, and Mannor]{baransi2014sub}
Akram Baransi, Odalric-Ambrym Maillard, and Shie Mannor.
\newblock Sub-sampling for multi-armed bandits.
\newblock In \emph{Machine Learning and Knowledge Discovery in Databases: European Conference, ECML PKDD 2014, Nancy, France, September 15-19, 2014. Proceedings, Part I 14}, pp.\  115--131. Springer, 2014.

\bibitem[Baudry et~al.(2021)Baudry, Saux, and Maillard]{baudry2021optimality}
Dorian Baudry, Patrick Saux, and Odalric-Ambrym Maillard.
\newblock From optimality to robustness: Dirichlet sampling strategies in stochastic bandits.
\newblock In \emph{NeurIPS 2021-35th International Conference on Neural Information Processing Systems}, 2021.

\bibitem[Bellemare et~al.(2023)Bellemare, Dabney, and Rowland]{bellemare2023distributional}
Marc~G Bellemare, Will Dabney, and Mark Rowland.
\newblock \emph{Distributional reinforcement learning}.
\newblock MIT Press, 2023.

\bibitem[Belomestny et~al.(2023)Belomestny, Menard, Naumov, Tiapkin, and Valko]{belomestny2023sharp}
Denis Belomestny, Pierre Menard, Alexey Naumov, Daniil Tiapkin, and Michal Valko.
\newblock Sharp deviations bounds for dirichlet weighted sums with application to analysis of bayesian algorithms.
\newblock \emph{arXiv preprint arXiv:2304.03056}, 2023.

\bibitem[Blackwell \& MacQueen(1973)Blackwell and MacQueen]{blackwell1973ferguson}
David Blackwell and James~B MacQueen.
\newblock Ferguson distributions via p{\'o}lya urn schemes.
\newblock \emph{The annals of statistics}, 1\penalty0 (2):\penalty0 353--355, 1973.

\bibitem[Castillo(2017)]{castillo2017polya}
Isma{\"e}l Castillo.
\newblock {Pólya tree posterior distributions on densities}.
\newblock \emph{Annales de l'Institut Henri Poincaré, Probabilités et Statistiques}, 53\penalty0 (4):\penalty0 2074 -- 2102, 2017.
\newblock \doi{10.1214/16-AIHP784}.
\newblock URL \url{https://doi.org/10.1214/16-AIHP784}.

\bibitem[Castillo(2024)]{castillo2024bayesian}
Isma{\"e}l Castillo.
\newblock Bayesian nonparametric statistics, st-flour lecture notes.
\newblock \emph{arXiv preprint arXiv:2402.16422}, 2024.

\bibitem[Clayton \& Berry(1985)Clayton and Berry]{clayton1985bayesian}
Murray~K Clayton and Donald~A Berry.
\newblock Bayesian nonparametric bandits.
\newblock \emph{The Annals of Statistics}, 13\penalty0 (4):\penalty0 1523--1534, 1985.

\bibitem[Cover(1999)]{cover1999elements}
Thomas~M Cover.
\newblock \emph{Elements of information theory}.
\newblock John Wiley \& Sons, 1999.

\bibitem[Cowan et~al.(2018)Cowan, Honda, and Katehakis]{cowan2018normal}
Wesley Cowan, Junya Honda, and Michael~N Katehakis.
\newblock Normal bandits of unknown means and variances.
\newblock \emph{Journal of Machine Learning Research}, 18\penalty0 (154):\penalty0 1--28, 2018.

\bibitem[Dey et~al.(2003)Dey, Erickson, and Ramamoorthi]{dey2003some}
Jyotirmoy Dey, RV~Erickson, and RV~Ramamoorthi.
\newblock Some aspects of neutral to right priors.
\newblock \emph{International statistical review}, 71\penalty0 (2):\penalty0 383--401, 2003.

\bibitem[Doss \& Sellke(1982)Doss and Sellke]{doss1982tails}
Hani Doss and Thomas Sellke.
\newblock The tails of probabilities chosen from a dirichlet prior.
\newblock \emph{The Annals of Statistics}, 10\penalty0 (4):\penalty0 1302--1305, 1982.

\bibitem[Eckles \& Kaptein(2014)Eckles and Kaptein]{eckles2014thompson}
Dean Eckles and Maurits Kaptein.
\newblock Thompson sampling with the online bootstrap.
\newblock \emph{arXiv preprint arXiv:1410.4009}, 2014.

\bibitem[Efron(1992)]{efron1992bootstrap}
Bradley Efron.
\newblock Bootstrap methods: another look at the jackknife.
\newblock In \emph{Breakthroughs in statistics: Methodology and distribution}, pp.\  569--593. Springer, 1992.

\bibitem[Efron \& Tibshirani(1994)Efron and Tibshirani]{efron1994introduction}
Bradley Efron and Robert~J Tibshirani.
\newblock \emph{An introduction to the bootstrap}.
\newblock Chapman and Hall/CRC, 1994.

\bibitem[Ferguson(1973)]{ferguson1973bayesian}
Thomas~S Ferguson.
\newblock A bayesian analysis of some nonparametric problems.
\newblock \emph{The annals of statistics}, pp.\  209--230, 1973.

\bibitem[Filippi et~al.(2010)Filippi, Cappe, Garivier, and Szepesv{\'a}ri]{filippi2010parametric}
Sarah Filippi, Olivier Cappe, Aur{\'e}lien Garivier, and Csaba Szepesv{\'a}ri.
\newblock Parametric bandits: The generalized linear case.
\newblock \emph{Advances in neural information processing systems}, 23, 2010.

\bibitem[Fong et~al.(2023)Fong, Holmes, and Walker]{fong2023martingale}
Edwin Fong, Chris Holmes, and Stephen~G Walker.
\newblock Martingale posterior distributions.
\newblock \emph{Journal of the Royal Statistical Society Series B: Statistical Methodology}, 85\penalty0 (5):\penalty0 1357--1391, 2023.

\bibitem[Fortini \& Petrone(2025)Fortini and Petrone]{fortini2025exchangeability}
Sandra Fortini and Sonia Petrone.
\newblock Exchangeability, prediction and predictive modeling in bayesian statistics.
\newblock \emph{Statistical Science}, 40\penalty0 (1):\penalty0 40--67, 2025.

\bibitem[Gautron et~al.(2022)Gautron, Padr{\'o}n, Preux, Bigot, Maillard, and Emukpere]{gautron2022gym}
Romain Gautron, Emilio~J Padr{\'o}n, Philippe Preux, Julien Bigot, Odalric-Ambrym Maillard, and David Emukpere.
\newblock gym-dssat: a crop model turned into a reinforcement learning environment.
\newblock \emph{arXiv preprint arXiv:2207.03270}, 2022.

\bibitem[Ghosal(2010)]{ghosal2010dirichlet}
Subhashis Ghosal.
\newblock The dirichlet process, related priors and posterior asymptotics.
\newblock \emph{Bayesian nonparametrics}, 28:\penalty0 35, 2010.

\bibitem[Ghosal \& van~der Vaart(2017)Ghosal and van~der Vaart]{ghosal2017fundamentals}
Subhashis Ghosal and Aad~W van~der Vaart.
\newblock \emph{Fundamentals of nonparametric Bayesian inference}, volume~44.
\newblock Cambridge University Press, 2017.

\bibitem[Harris et~al.(2020)Harris, Millman, Van Der~Walt, Gommers, Virtanen, Cournapeau, Wieser, Taylor, Berg, Smith, et~al.]{harris2020array}
Charles~R Harris, K~Jarrod Millman, St{\'e}fan~J Van Der~Walt, Ralf Gommers, Pauli Virtanen, David Cournapeau, Eric Wieser, Julian Taylor, Sebastian Berg, Nathaniel~J Smith, et~al.
\newblock Array programming with numpy.
\newblock \emph{Nature}, 585\penalty0 (7825):\penalty0 357--362, 2020.

\bibitem[Holmes \& Walker(2023)Holmes and Walker]{holmes2023statistical}
Chris~C Holmes and Stephen~G Walker.
\newblock Statistical inference with exchangeability and martingales.
\newblock \emph{Philosophical Transactions of the Royal Society A}, 381\penalty0 (2247):\penalty0 20220143, 2023.

\bibitem[Hong et~al.(2020)Hong, Kveton, Zaheer, Chow, Ahmed, and Boutilier]{hong2020latent}
Joey Hong, Branislav Kveton, Manzil Zaheer, Yinlam Chow, Amr Ahmed, and Craig Boutilier.
\newblock Latent bandits revisited.
\newblock \emph{Advances in Neural Information Processing Systems}, 33:\penalty0 13423--13433, 2020.

\bibitem[Hong et~al.(2022)Hong, Kveton, Zaheer, Ghavamzadeh, and Boutilier]{hong2022thompson}
Joey Hong, Branislav Kveton, Manzil Zaheer, Mohammad Ghavamzadeh, and Craig Boutilier.
\newblock Thompson sampling with a mixture prior.
\newblock In \emph{International Conference on Artificial Intelligence and Statistics}, pp.\  7565--7586. PMLR, 2022.

\bibitem[Hoogenboom et~al.(2019)Hoogenboom, Porter, Boote, Shelia, Wilkens, Singh, White, Asseng, Lizaso, Moreno, et~al.]{hoogenboom2019dssat}
Gerrit Hoogenboom, Cheryl~H Porter, Kenneth~J Boote, Vakhtang Shelia, Paul~W Wilkens, Upendra Singh, Jeffrey~W White, Senthold Asseng, Jon~I Lizaso, L~Patricia Moreno, et~al.
\newblock The dssat crop modeling ecosystem.
\newblock In \emph{Advances in crop modelling for a sustainable agriculture}, pp.\  173--216. Burleigh Dodds Science Publishing, 2019.

\bibitem[Hunter(2007)]{hunter2007matplotlib}
John~D Hunter.
\newblock Matplotlib: A 2d graphics environment.
\newblock \emph{Computing in science \& engineering}, 9\penalty0 (03):\penalty0 90--95, 2007.

\bibitem[Ishwaran \& James(2001)Ishwaran and James]{ishwaran2001gibbs}
Hemant Ishwaran and Lancelot~F James.
\newblock Gibbs sampling methods for stick-breaking priors.
\newblock \emph{Journal of the American statistical Association}, 96\penalty0 (453):\penalty0 161--173, 2001.

\bibitem[Kaufmann et~al.(2012)Kaufmann, Korda, and Munos]{kaufmann2012thompson}
Emilie Kaufmann, Nathaniel Korda, and R{\'e}mi Munos.
\newblock Thompson sampling: An asymptotically optimal finite-time analysis.
\newblock In \emph{International conference on algorithmic learning theory}, pp.\  199--213. Springer, 2012.

\bibitem[Kveton et~al.(2019)Kveton, Szepesvari, Vaswani, Wen, Lattimore, and Ghavamzadeh]{kveton2019garbage}
Branislav Kveton, Csaba Szepesvari, Sharan Vaswani, Zheng Wen, Tor Lattimore, and Mohammad Ghavamzadeh.
\newblock Garbage in, reward out: Bootstrapping exploration in multi-armed bandits.
\newblock In \emph{International Conference on Machine Learning}, pp.\  3601--3610. PMLR, 2019.

\bibitem[Kveton et~al.(2024)Kveton, Oreshkin, Park, Deshmukh, and Song]{kveton2024online}
Branislav Kveton, Boris Oreshkin, Youngsuk Park, Aniket~Anand Deshmukh, and Rui Song.
\newblock Online posterior sampling with a diffusion prior.
\newblock \emph{Advances in Neural Information Processing Systems}, 37:\penalty0 130463--130484, 2024.

\bibitem[Muliere \& Tardella(1998)Muliere and Tardella]{muliere1998approximating}
Pietro Muliere and Luca Tardella.
\newblock Approximating distributions of random functionals of ferguson-dirichlet priors.
\newblock \emph{Canadian Journal of Statistics}, 26\penalty0 (2):\penalty0 283--297, 1998.

\bibitem[M{\"u}ller \& Mitra(2013)M{\"u}ller and Mitra]{muller2013bayesian}
Peter M{\"u}ller and Riten Mitra.
\newblock Bayesian nonparametric inference--why and how.
\newblock \emph{Bayesian analysis (Online)}, 8\penalty0 (2):\penalty0 10--1214, 2013.

\bibitem[M{\"u}ller et~al.(2015)M{\"u}ller, Quintana, Jara, and Hanson]{muller2015bayesian}
Peter M{\"u}ller, Fernando~Andr{\'e}s Quintana, Alejandro Jara, and Tim Hanson.
\newblock \emph{Bayesian nonparametric data analysis}, volume~1.
\newblock Springer, 2015.

\bibitem[Orbanz(2009)]{orbanz2009construction}
Peter Orbanz.
\newblock Construction of nonparametric bayesian models from parametric bayes equations.
\newblock \emph{Advances in neural information processing systems}, 22, 2009.

\bibitem[Osband \& Van~Roy(2015)Osband and Van~Roy]{osband2015bootstrapped}
Ian Osband and Benjamin Van~Roy.
\newblock Bootstrapped thompson sampling and deep exploration.
\newblock \emph{arXiv preprint arXiv:1507.00300}, 2015.

\bibitem[Osband et~al.(2016)Osband, Blundell, Pritzel, and Van~Roy]{osband2016deep}
Ian Osband, Charles Blundell, Alexander Pritzel, and Benjamin Van~Roy.
\newblock Deep exploration via bootstrapped dqn.
\newblock \emph{Advances in neural information processing systems}, 29, 2016.

\bibitem[Osband et~al.(2018)Osband, Aslanides, and Cassirer]{osband2018randomized}
Ian Osband, John Aslanides, and Albin Cassirer.
\newblock Randomized prior functions for deep reinforcement learning.
\newblock \emph{Advances in neural information processing systems}, 31, 2018.

\bibitem[Phan et~al.(2019)Phan, Abbasi~Yadkori, and Domke]{phan2019thompson}
My~Phan, Yasin Abbasi~Yadkori, and Justin Domke.
\newblock Thompson sampling and approximate inference.
\newblock \emph{Advances in Neural Information Processing Systems}, 32, 2019.

\bibitem[Pitman \& Yor(2006)Pitman and Yor]{pitman2006bessel}
Jim Pitman and Marc Yor.
\newblock Bessel processes and infinitely divisible laws.
\newblock In \emph{Stochastic Integrals: Proceedings of the LMS Durham Symposium, July 7--17, 1980}, pp.\  285--370. Springer, 2006.

\bibitem[Rasmussen(2003)]{rasmussen2003gaussian}
Carl~Edward Rasmussen.
\newblock Gaussian processes in machine learning.
\newblock In \emph{Summer school on machine learning}, pp.\  63--71. Springer, 2003.

\bibitem[Riou \& Honda(2020)Riou and Honda]{riou2020bandit}
Charles Riou and Junya Honda.
\newblock Bandit algorithms based on thompson sampling for bounded reward distributions.
\newblock In \emph{Algorithmic Learning Theory}, pp.\  777--826. PMLR, 2020.

\bibitem[Roeder(1990)]{roeder1990density}
Kathryn Roeder.
\newblock Density estimation with confidence sets exemplified by superclusters and voids in the galaxies.
\newblock \emph{Journal of the American Statistical Association}, 85\penalty0 (411):\penalty0 617--624, 1990.

\bibitem[Rubin(1981)]{rubin1981bayesian}
Donald~B Rubin.
\newblock The bayesian bootstrap.
\newblock \emph{The annals of statistics}, pp.\  130--134, 1981.

\bibitem[Russo \& Van~Roy(2016)Russo and Van~Roy]{russo2016information}
Daniel Russo and Benjamin Van~Roy.
\newblock An information-theoretic analysis of thompson sampling.
\newblock \emph{The Journal of Machine Learning Research}, 17\penalty0 (1):\penalty0 2442--2471, 2016.

\bibitem[Russo et~al.(2018)Russo, Van~Roy, Kazerouni, Osband, Wen, et~al.]{russo2018tutorial}
Daniel~J Russo, Benjamin Van~Roy, Abbas Kazerouni, Ian Osband, Zheng Wen, et~al.
\newblock A tutorial on thompson sampling.
\newblock \emph{Foundations and Trends{\textregistered} in Machine Learning}, 11\penalty0 (1):\penalty0 1--96, 2018.

\bibitem[Sethuraman(1994)]{sethuraman1994constructive}
Jayaram Sethuraman.
\newblock A constructive definition of dirichlet priors.
\newblock \emph{Statistica sinica}, pp.\  639--650, 1994.

\bibitem[Srinivas et~al.(2009)Srinivas, Krause, Kakade, and Seeger]{srinivas2009gaussian}
Niranjan Srinivas, Andreas Krause, Sham~M Kakade, and Matthias Seeger.
\newblock Gaussian process optimization in the bandit setting: No regret and experimental design.
\newblock \emph{arXiv preprint arXiv:0912.3995}, 2009.

\bibitem[Thompson(1933)]{thompson1933likelihood}
William~R Thompson.
\newblock On the likelihood that one unknown probability exceeds another in view of the evidence of two samples.
\newblock \emph{Biometrika}, 25\penalty0 (3-4):\penalty0 285--294, 1933.

\bibitem[Virtanen et~al.(2020)Virtanen, Gommers, Oliphant, Haberland, Reddy, Cournapeau, Burovski, Peterson, Weckesser, Bright, et~al.]{virtanen2020scipy}
Pauli Virtanen, Ralf Gommers, Travis~E Oliphant, Matt Haberland, Tyler Reddy, David Cournapeau, Evgeni Burovski, Pearu Peterson, Warren Weckesser, Jonathan Bright, et~al.
\newblock Scipy 1.0: fundamental algorithms for scientific computing in python.
\newblock \emph{Nature methods}, 17\penalty0 (3):\penalty0 261--272, 2020.

\bibitem[Wald(1961)]{wald1961statistical}
Abraham Wald.
\newblock \emph{Statistical decision functions}.
\newblock Wiley, 1961.

\bibitem[Williams(1991)]{williams1991probability}
David Williams.
\newblock \emph{Probability with martingales}.
\newblock Cambridge university press, 1991.

\bibitem[Zellner(1988)]{zellner1988optimal}
Arnold Zellner.
\newblock Optimal information processing and bayes's theorem.
\newblock \emph{The American Statistician}, 42\penalty0 (4):\penalty0 278--280, 1988.

\end{thebibliography}

\beginSupplementaryMaterials

\section{General Bayesian framework}\label{sec:Bayesian-framework}

In this section, we highlight a generalized Bayesian framework, and the conditions for existence of posteriors and, when they exist, methods of deriving posteriors from priors. Most of these results are standard in Bayesian-non-parametric statistics, and we refer the reader to \cite{ghosal2017fundamentals, orbanz2009construction} for details.

A general Bayesian modeling problem can be formulated as follows. We choose prior $Q$ on parameter $\Theta \in \mathbf{T}$ and the observation model $M$ as $P_{\Theta}$, observation space as $\mathbf{X}$. To summarize, both Bayesian and non-parametric Bayesian models can be written as follows, 
\begin{align}\label{eq:Bayesian-framework}
\Theta \sim Q,\\ 
X_{1},. . ., X_{n}|\Theta \sim P_{\Theta} \label{eq:Bayesian-framework2}
\end{align}

Whereas for Bayesian parametric models the parameter space $\mathbf{T}$ is finite-dimensional (e.g. $\mathbb{R}^{d}$), it's infinite for Bayesian non-parametric models. Thus in order to define a non-parametric Bayesian model, we have to define a probability distribution (the prior) on an infinite-dimensional space. A distribution on an infinite-dimensional space $\mathbf{T}$ is a stochastic process with paths in $\mathbf{T}$. 

For more clarity, the DP model can be  re-written in the framework of Eqs.~\ref{eq:Bayesian-framework}-\ref{eq:Bayesian-framework2} as follows, 
\begin{align}\label{eq:Bayesian-framew}
\Theta \sim DP(\alpha,G_{0}),\\ 
X_{1},. . ., X_{n}|\Theta \sim \Theta
\end{align}

The goal in Bayesian (both parametric and nonparmetric) inference is to figure out the posterior which is a probability kernel given as,
 \[q[\cdot, x] = \mathbb{P}(\Theta \in \cdot|X=x). \]

For existence of $q$ the following is required,

\begin{thm}
If $\mathbf{T}$ is a standard Borel space, $\mathbf{X}$ a measurable space, and a Bayesian model is specified as in Eqs.~\ref{eq:Bayesian-framework}, the posterior $q$ exists
\end{thm}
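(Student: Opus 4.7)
The object to construct is a probability kernel $q : \mathcal{B}(\mathbf{T}) \times \mathbf{X} \to [0,1]$ such that $q[\cdot, x]$ is the conditional law of $\Theta$ given $X = x$ under the joint distribution determined by the prior and the observation model. My plan is to recognize this as a standard existence theorem for regular conditional distributions (disintegrations) and to reduce it to the classical construction on $[0,1]$ via the structure theorem for standard Borel spaces.

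\emph{Step 1: build the joint.} I would first assemble, from the prior $Q$ on $\mathbf{T}$ and the observation model $\theta \mapsto P_\theta$ (which is by assumption a probability kernel from $\mathbf{T}$ to $\mathbf{X}$), the joint probability measure $\mu$ on $\mathbf{T} \times \mathbf{X}$ defined on product-measurable rectangles by
\[
\mu(A \times B) \;=\; \int_A P_\theta(B)\, Q(d\theta),
\]
and extended to $\mathcal{B}(\mathbf{T}) \otimes \mathcal{B}(\mathbf{X})$ by the standard Ionescu--Tulcea/Fubini argument. Let $P_X$ denote the $\mathbf{X}$-marginal of $\mu$. The posterior kernel I seek is precisely a regular version of $\Prob(\Theta \in \cdot \mid X)$ under $\mu$, i.e., a map $q$ satisfying $q[\cdot, x]$ is a probability measure on $\mathbf{T}$ for every $x$, $x \mapsto q[A, x]$ is measurable for every $A \in \mathcal{B}(\mathbf{T})$, and the disintegration identity $\mu(A \times B) = \int_B q[A, x]\, P_X(dx)$ holds for all measurable rectangles $A \times B$.

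\emph{Step 2: reduce to $[0,1]$.} Because $\mathbf{T}$ is a standard Borel space, the Borel isomorphism theorem supplies a measurable bijection $\varphi : \mathbf{T} \to T^*$ with measurable inverse, where $T^*$ is either countable or Borel-isomorphic to $[0,1]$. Pushing the problem through $\varphi$, it suffices to construct a regular conditional law in the case $\mathbf{T} = [0,1]$ equipped with its Borel $\sigma$-algebra; the countable case is immediate.

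\emph{Step 3: construct on $[0,1]$ and transport back.} On $\mathbf{T} = [0,1]$ I would produce, for each rational $r \in \mathbb{Q} \cap [0,1]$, a version $F(r, x)$ of the conditional probability $\Prob_\mu(\Theta \le r \mid X = x)$ via the Radon--Nikodym theorem applied to the measure $B \mapsto \mu(\{\Theta \le r\} \cap (\mathbf{T} \times B))$ with respect to $P_X$. Since only countably many rationals appear, one can remove a single $P_X$-null set outside of which $r \mapsto F(r, x)$ is monotone, right-continuous on $\mathbb{Q}$, with limits $0$ and $1$ at the endpoints; extending $F(\cdot, x)$ to a cumulative distribution function on $[0,1]$ and defining $q[\cdot, x]$ via Carathéodory's extension yields the desired kernel on the good set, and any arbitrary fixed probability measure on the null set. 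Measurability of $x \mapsto q[A, x]$ then follows first for half-lines by construction, then for a generating $\pi$-system, and finally for all Borel $A$ by a monotone-class argument. The disintegration identity holds on half-lines and extends by uniqueness. Transporting through $\varphi^{-1}$ gives the kernel $q$ on the original $\mathbf{T}$.

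\emph{Expected main obstacle.} The delicate point is the simultaneous handling of null sets: one must guarantee that after redefining $F(r, x)$ on $P_X$-null sets there remains a \emph{single} $x$-exceptional null set outside of which $F(\cdot, x)$ is a bona fide distribution function. This is exactly where the countability of $\mathbb{Q}$ (and hence of standard Borel generating algebras) is essential, and where the standard-Borel hypothesis on $\mathbf{T}$ cannot be dispensed with. Once that is in hand, everything else is a routine monotone-class extension.
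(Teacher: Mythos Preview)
Your proposal is correct and is essentially the standard textbook construction of a regular conditional distribution when the parameter space is standard Borel (reduce via the Borel isomorphism theorem, build a conditional cdf on the rationals with Radon--Nikodym, patch null sets using countability, and extend by a monotone-class argument). There is nothing to compare against, however: the paper does not give its own proof of this theorem. It is stated in the appendix as a background fact with the remark that ``most of these results are standard in Bayesian-nonparametric statistics'' and a pointer to \cite{ghosal2017fundamentals,orbanz2009construction}. Your write-up is precisely the argument those references carry out, so you have supplied what the paper only cites.
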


Having established the existence properties, let's discuss different ways of obtaining posteriors, given observations. In Bayesian framework, there are two ways, Bayes rule and Conjugacy, and we give existence results for each of these,

\subsection{Bayes-rule}
\label{subsec:Bayes_rule}
It's a popular update rule, however it's not always possible to use Bayes-rule for obtaining posteriors. The following theorem makes it concrete,

\begin{thm}\label{thm:Bayes-rule}
(Bayes’ Theorem). Let $\mathbf{M} = P( \cdot , \mathbf{T})$ be an observation model and $Q \in PM(T) $) a prior (PM denotes space of probability measures on $\mathbf{T}$). Require that there is a $\sigma$-finite measure $\mu$ on $\mathbf{X}$ such that $P( \cdot , \Theta) \ll \mu$ for every $\Theta \in \mathbf{T}$. Then the posterior under conditionally i.i.d. observations $X_{1}, ..., X_{n}$ is given as below, and $\Prob\{{P(X_{1},...,X_{n}) \in {0,\infty}}\} = 0$
\[ Q(d\Theta|X_{1}=x_{1},. . ., X_{n}=x_{n}) = \frac{\prod_{i=1}^{n} P(x_{i}|\Theta)}{P(X_{1},. . .,X_{n})} Q(d\Theta)  \]
\end{thm}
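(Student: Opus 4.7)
The plan is to derive the posterior as a regular conditional distribution by explicitly constructing the joint law on $\mathbf{T} \times \mathbf{X}^n$ and then disintegrating it along the projection onto $\mathbf{X}^n$. First, since $P(\cdot,\Theta) \ll \mu$ for every $\Theta \in \mathbf{T}$, the Radon--Nikodym theorem supplies a density $p(x|\Theta) := \frac{dP(\cdot,\Theta)}{d\mu}(x)$. A standard measurable-selection argument (exploiting that $\mathbf{T}$ is standard Borel and that $P$ is a probability kernel) lets us choose a jointly measurable version of $(x,\Theta) \mapsto p(x|\Theta)$. Conditional i.i.d.~sampling then means the joint kernel of $(X_1,\ldots,X_n)$ given $\Theta$ has density $\prod_{i=1}^n p(x_i|\Theta)$ with respect to $\mu^{\otimes n}$.

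Second, I would build the full joint probability measure on $\mathbf{T} \times \mathbf{X}^n$ as
\[
\mathcal{J}(d\Theta, dx_{1:n}) = Q(d\Theta) \prod_{i=1}^n p(x_i|\Theta)\,\mu(dx_i),
\]
and verify via Fubini that it is indeed a probability measure (the inner integral is $1$ for every $\Theta$, so integrating $Q$ gives $1$). Pushing $\mathcal{J}$ forward to $\mathbf{X}^n$ yields the marginal of $X_{1:n}$, which by Fubini is absolutely continuous with respect to $\mu^{\otimes n}$ with density
\[
m(x_{1:n}) = \int_{\mathbf{T}} \prod_{i=1}^n p(x_i|\Theta)\, Q(d\Theta).
\]
A further Fubini argument shows $\int m\, d\mu^{\otimes n} = 1$, so $m < \infty$ a.e., and hence $\Prob\{m(X_{1:n}) = \infty\} = 0$. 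For the lower side, the set $N = \{x_{1:n} : m(x_{1:n}) = 0\}$ satisfies $\mathcal{J}(\mathbf{T} \times N) = \int_N m\, d\mu^{\otimes n} = 0$, which implies the marginal assigns $N$ zero mass; combined, $\Prob\{m(X_{1:n}) \in \{0,\infty\}\} = 0$, matching the claim.

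Third, to identify the posterior I would verify the candidate kernel
\[
\hat Q(B | x_{1:n}) := \frac{1}{m(x_{1:n})} \int_B \prod_{i=1}^n p(x_i|\Theta)\, Q(d\Theta)
\]
(defined arbitrarily on the null set where $m \notin (0,\infty)$) satisfies the defining property of a regular conditional distribution: for every measurable $B \subset \mathbf{T}$ and every measurable $A \subset \mathbf{X}^n$,
\[
\int_A \hat Q(B|x_{1:n})\, m(x_{1:n})\, \mu^{\otimes n}(dx_{1:n}) = \mathcal{J}(B \times A).
\]
This follows by Fubini after substituting the definitions. Since $\mathbf{T}$ is standard Borel, the regular conditional distribution is essentially unique, so $\hat Q(\cdot | x_{1:n})$ is a version of $Q(d\Theta | X_{1:n} = x_{1:n})$, and its density $\prod_i p(x_i|\Theta)/m(x_{1:n})$ against $Q$ is precisely the claimed formula.

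The main obstacles are technical rather than conceptual: securing a jointly measurable version of $p(x|\Theta)$ (which uses that $\mathbf{T}$ is standard Borel and that $P$ is a Markov kernel), and carefully handling the null set where $m \in \{0,\infty\}$ so that the displayed identity holds for a genuine version of the conditional distribution and not merely $Q$-almost everywhere. Once these measurability issues are dispatched, the rest is bookkeeping with Fubini and the uniqueness of regular conditional probabilities on standard Borel spaces.
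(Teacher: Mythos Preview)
The paper does not actually prove this theorem: it is stated in the appendix as a standard result in Bayesian nonparametrics, with the reader referred to \cite{ghosal2017fundamentals, orbanz2009construction} for details. So there is no in-paper proof to compare against.

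That said, your proposal is correct and is precisely the standard route one finds in those references: build the joint law on $\mathbf{T}\times\mathbf{X}^n$ using the Radon--Nikodym densities, push forward to get the marginal density $m(x_{1:n})$, check $\Prob\{m(X_{1:n})\in\{0,\infty\}\}=0$ via Fubini, and then verify that the candidate kernel satisfies the defining disintegration identity for a regular conditional distribution, invoking uniqueness on standard Borel spaces. The two technical points you flag---joint measurability of $(x,\Theta)\mapsto p(x|\Theta)$ and handling the exceptional null set---are exactly the places where care is needed, and you have identified them correctly. Nothing is missing.
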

\subsection{Conjugacy}
\label{subsec:conjugacy}

For most non-parametric priors, the important absolute continuity  condition in Theorem~\ref{thm:Bayes-rule} doesn't hold, and hence Bayes' rule is not applicable. For example, If $\mathbb{P}[d\Theta|X_{1:n}]$ is the posterior of a Dirichlet process, then there is no $\sigma$-finite measure $\nu$ which satisfies $\mathbb{P}[d\Theta|X_{1:n} = x_{1:n}] \ll \nu$ for all $x_{1:n}$. In particular, the prior does not, and so there is no density $P(\Theta|x_{1:n})$~\citep{ghosal2017fundamentals}. In order to remedy this curse on non-parametric priors, the most important alternative to Bayes theorem for computing posterior distributions is conjugacy. Suppose $\mathbf{M}$ is an observation model, and consider now a family $\mathcal{Q} \subset PM(\mathbb T)$ of prior distributions, rather than an individual prior. We assume that the family $\mathcal{Q}$ is indexed by a parameter space $\mathbf{Y}$, that is, $\mathbf{M} = \{Q_y|y \in \mathbf{Y}\}$. Many important Bayesian models have the following two properties:

\begin{itemize}
    \item The posterior under any prior in $\mathcal{Q}$ is again an element of $\mathcal{Q}$; hence, for any specific set of observations, there is an $y' \in \mathbf{Y}$ such that the posterior is $Q_{y'}$
    \item The posterior parameter $y'$ can be computed from the data by a simple, tractable formula.
\end{itemize}

The above two points define the property of conjugacy. We saw in the main paper that DP priors enjoy conjugacy, and saw the simple update formula for the posterior, that resulted thanks to this property of conjugacy. For more details, we refer the reader to~\citep{orbanz2009construction}.

\section{Finite Stick breaking representation of Dirichlet Process priors}
\label{sec:stick-breaking-finite}

The finite stick-breaking representation of DP priors discussed in the main paper (Eqs.\ref{def:SB-DP}-\ref{def:SB-DP-3}) has been pivotal in the success of DP based Bayesian-nonparametric models. A major reason for this success is that such truncated representation is provably efficient~\citep{ishwaran2001gibbs}. Particularly, to quantify the accuracy loss owing to truncation consider the quantities, $T_{N} = (\sum_{N}^{\infty} q_{i})^{r}$ and $U_{N} = \sum_{N}^{\infty} q_{i}^{r}$, where $N$ is the level at which the representation is truncated,
\begin{align}
\E(T_{N}(r,a,b)) &= \left(\frac{\alpha}{\alpha+r}\right)^{N-1} ,\\
\E(U_{N}(r,a,b)) &= \left(\frac{\alpha}{\alpha+r}\right)^{N-1} \frac{\Gamma(r)\Gamma(\alpha + 1)}{\Gamma (\alpha + r)} 
\end{align}

Notice that both expressions decay exponentially fast in $K$, and hence good accuracy is achieved for moderate $K$. Fig.~\ref{fig:DP-alphalow-high} shows an application of this scheme to sample random measures from a DP prior, $\mathrm{DP}(\alpha, F_{0})$  for two different values of concentration parameter, $\alpha$. In order to give more intuition to appreciate the utility of DPs for nonparametric inference,  We given an example on inference on a galaxy-dataset. We also used this (and some other) benchmarks to validate the performance of our StickBreaking module for DPPS. 

\begin{figure}[H]
\centering
\includegraphics[width=0.48\linewidth]{ 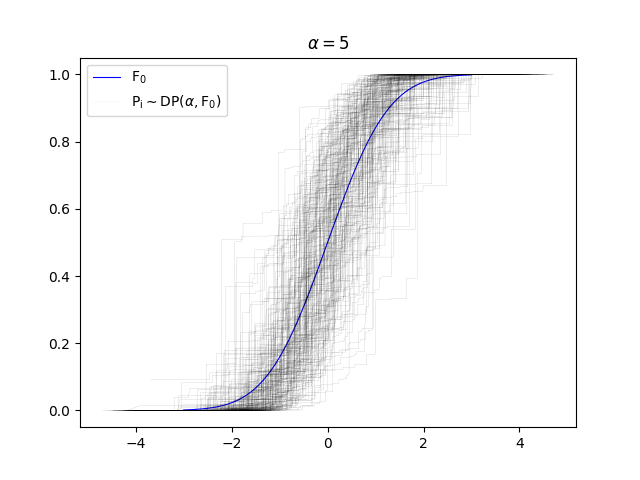}
\includegraphics[width=0.48\linewidth]{ 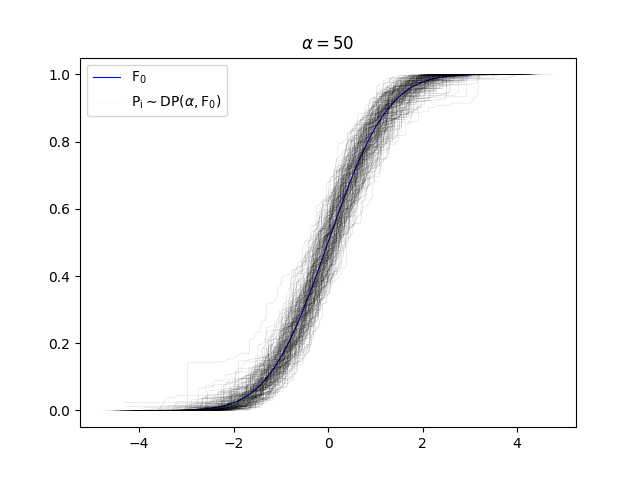}
\caption{200 random measures sampled from $\mathrm{DP}(\alpha,\mathrm{F_{0}})$ where $\alpha = 5$ (left) and $50$ (right), $\mathrm{F_{0}} = N(0,1)$}
\label{fig:DP-alphalow-high}
\end{figure}

\paragraph{DPs for galaxy data-set}
We illustrate the application of Dirichlet processes for density estimation on a data set from the astronomy literature~\citep{roeder1990density}. The measurements are velocities at which galaxies in the Corona-Borealis region are moving away from our galaxy. If the galaxies are clustered, the velocity density will be multimodal, with clusters corresponding to modes. This happens to be the case, and the multi-modal nature is evident in the CDF of the data in Figure~\ref{fig:galaxy-dataset-DPprior/posterior} where the left and right regions of the CDF are almost flat, and most mass resides in the center. We estimate the CDF of this density using DP priors. Starting with a $\mathrm{DP}(\alpha=2,\mathrm{F_{0}}=\mathrm{N}(10,1))$ DP prior, we obtain a DP posterior, and the spread of distributions sampled from the DP posterior can be seen as confidence-set of the CDF of density estimated through Dirichlet process.

\begin{figure}[h]
\centering
\includegraphics[width=0.6\linewidth]{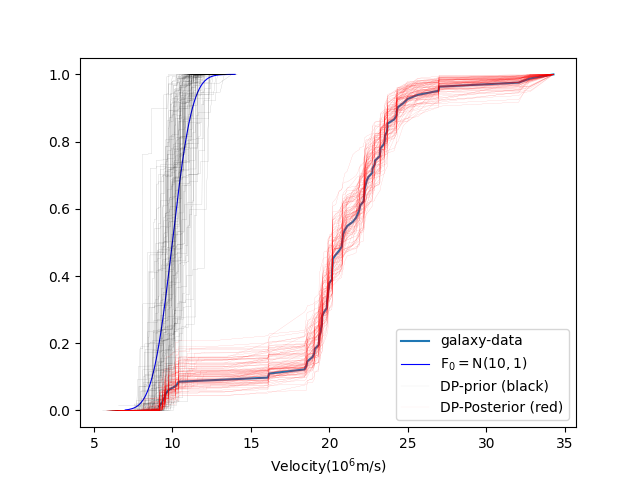}
\caption{DP prior and DP posterior compared against original galaxy dataset distribution.}
\label{fig:galaxy-dataset-DPprior/posterior}
\end{figure}

\section{DPPS for a Gaussian bandit with unknown means and variances}
\label{sec:DPPS-Gaussian-bandit}

\begin{figure}[H]
\centering
\includegraphics[width=0.55\linewidth]{ 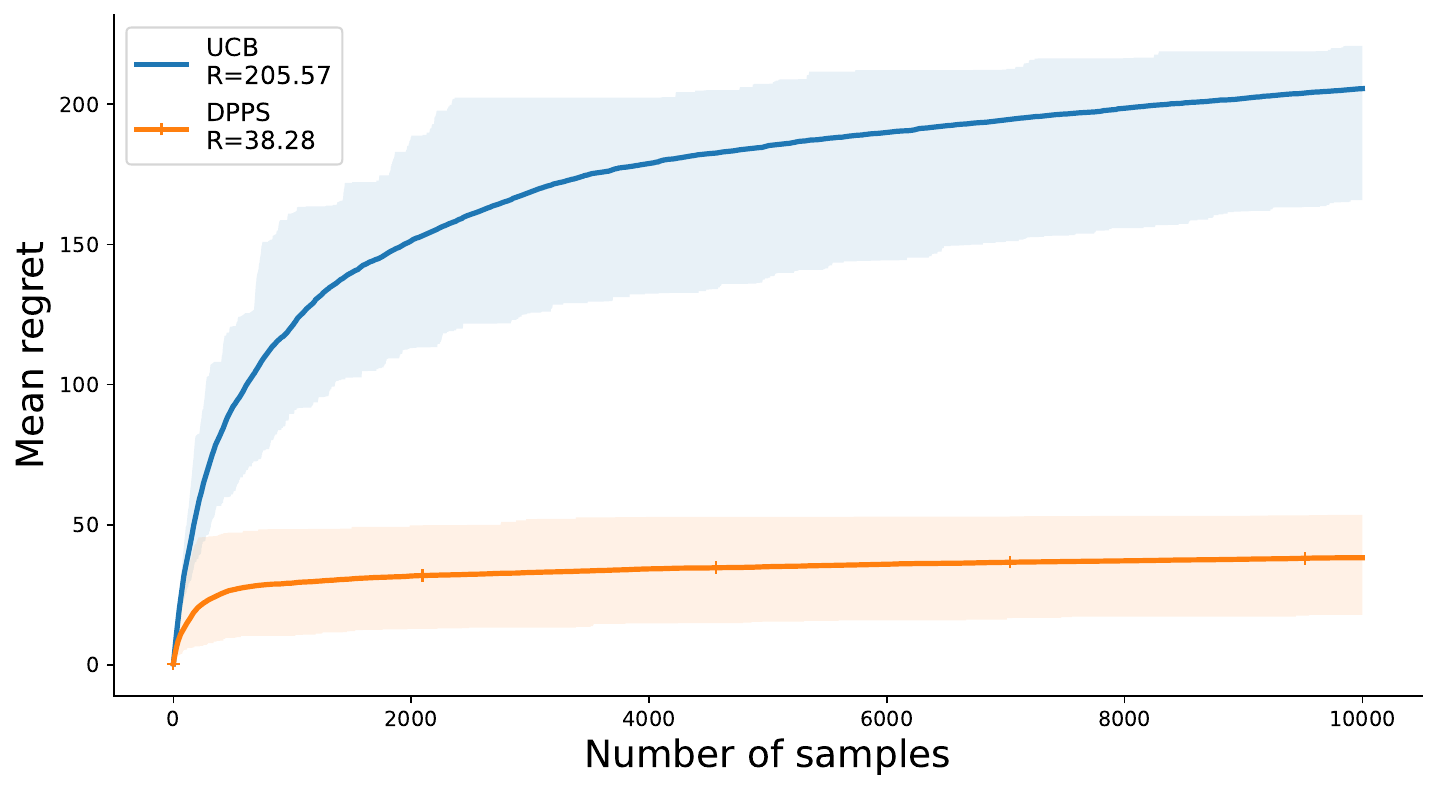}
\caption{DPPS for a challenging Gaussian bandit setting}
\label{fig:DPPS_Gaussian-bandit}
\end{figure}

A challenging  bandit setting is that of Gaussian bandit environment with both mean and variance of the underlying Gaussian distribution as unknown~\citep{cowan2018normal} to the bandit algorithm. Here we exhibit performance of DPPS in such a 7 arm Gaussian bandit environment $\{N(\mu_{k},\sigma_{k})\}_{k=1}^{K=7}$. The mean and variance of Gaussian bandit arms are sampled independently from a Gaussian such that $\mu_{k} \sim N(0,0.5)$  and $\sigma_{k} = |\psi_{k}|, \psi_{k} \sim N(0,0.5)$. Cumulative Regret averaged over 100 runs on one of the sampled instance of bandit environment is shown in Fig.~\ref{fig:DPPS_Gaussian-bandit}. Excellent performance of DPPS is evident. In this experiment, we chose $\alpha =2$, base measure of DP, $F_{0}$, as $N(0,0.5)$. 
 
\section{Choice of hyperparameters  in numerical experiments}
\label{sec:DPPSHyperparameters}

\begin{figure}[H]
\centering
\includegraphics[width=0.48\linewidth]{ 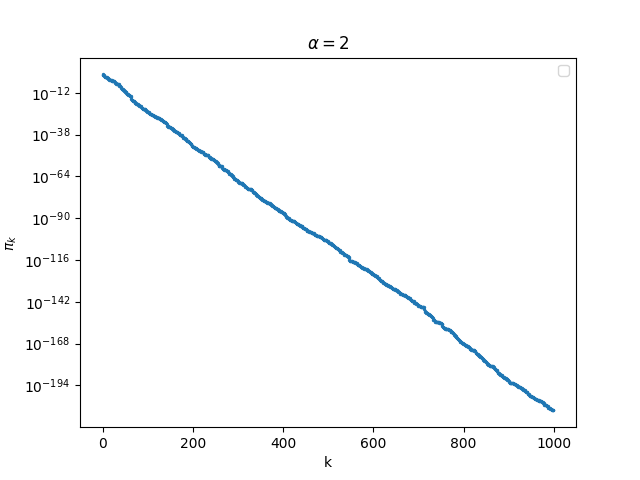}
\includegraphics[width=0.48\linewidth]{ 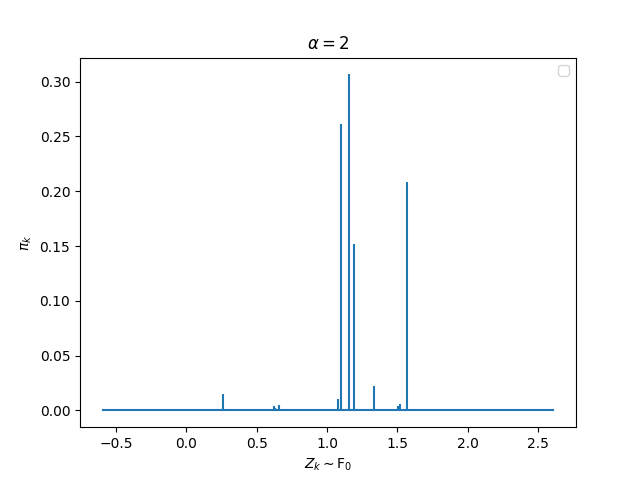}
\caption{Plot of first 1000 stick-breaking probability measure weights, $\pi_{k}$, for $\mathrm{DP}(\alpha=2,F_{0})$ with k (left) and with $Z_{k}\sim F_{0} (= N(0,1)$ (right)}
\label{fig:DP-alpha=2}
\end{figure}

\begin{figure}[H]
\centering
\includegraphics[width=0.48\linewidth]{ 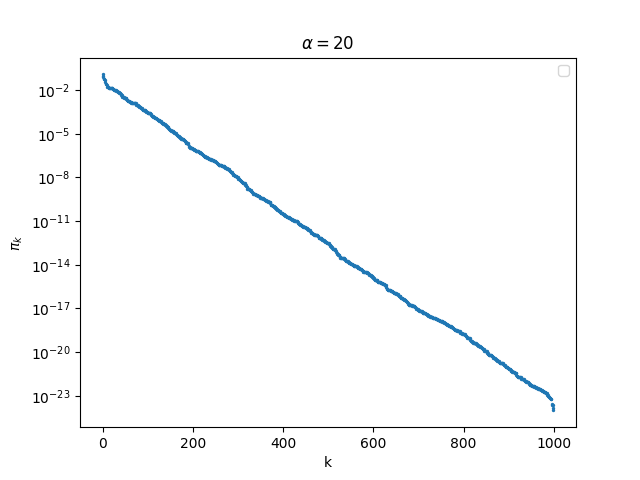}
\includegraphics[width=0.48\linewidth]{ 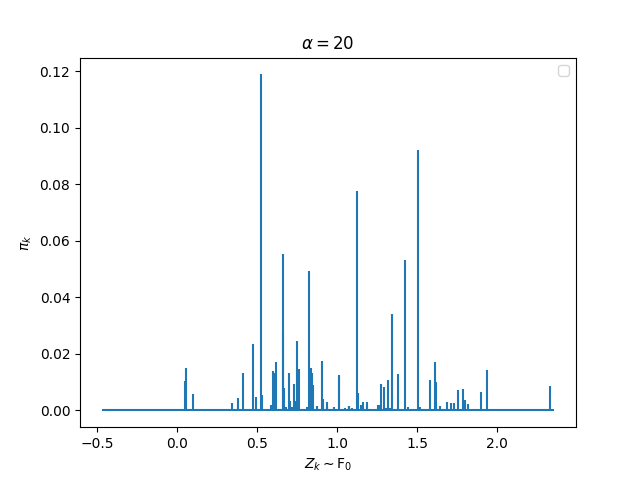}
\caption{Plot of first 1000 stick-breaking probability measure weights, $\pi_{k}$, for $\mathrm{DP}(\alpha=20,F_{0})$ with k (left) and with $Z_{k}\sim F_{0} (= N(0,1)$ (right)}
\label{fig:DP-alpha=20}
\end{figure}

Two hyperparameters in DPPS are $\alpha$ (concentration parameter) and $k_t$ (i.e. truncation level) in the stick breaking representation of DP prior (not the posterior), DP$(\alpha,F_{0})$.  We used $\alpha =2$ and $k_t = 100$ in all the experiments. Note that the choice of $\alpha$ directly influences the choice of $k_t$. This is because the number of weights $q_{i}$ in the stick breaking representation, $\sum q_{i} \delta_{x_{i}}$, carrying significant probability mass increase with increase in $\alpha$ ($V_{i}\sim$ Beta$(1,\alpha)$), and for higher $\alpha$ one needs to increase $k_t$.  For example, with $\alpha = 20$, we took $k_t =300$, and we got similar results, with a slight increase in computation cost though. An easy way to determine $k_t$ is to plot the stick breaking weights and remove stick breaking weights that are below a certain threshold (we chose $10^{-10}$ randomly). This relationship between $\alpha$ and stick breaking probability weights, $q_{i}$, can be seen in a simple example of $\mathrm{DP}(\alpha, F_{0})$ as shown in figs.~\ref{fig:DP-alpha=2}  and ~\ref{fig:DP-alpha=20}. Whereas for lower value of $\alpha$ only few weights have significant mass, for higher $\alpha$ the weights are more evenly spread compared to lower $\alpha$ case. 

\paragraph{Choice of base measure, $F_{0}$, of DP prior}

For choosing, $F_{0}$, the tail of the underlying reward distribution and a fact on the support of DPs is important. 
 
\begin{lem}[Support of DPs, see~\citep{ghosal2010dirichlet}] In the \textit{weak topology}, the support of DP$(\alpha,F_{0})$ is characterized as all probability measures $P^\star$ whose supports are contained in that of $F_{0}$
\end{lem}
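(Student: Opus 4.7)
The plan is to show both inclusions that make up the support characterization in the weak topology on $\mathrm{PM}(\Theta)$: (i) any random $P$ sampled from $\mathrm{DP}(\alpha,F_0)$ is almost surely supported inside $\mathrm{supp}(F_0)$, so no $P^\star$ charging the complement can be in the weak support; (ii) every $P^\star$ with $\mathrm{supp}(P^\star)\subseteq \mathrm{supp}(F_0)$ has every weak-open neighborhood of positive DP-mass.

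For (i), I would invoke the stick-breaking construction of Eqs.~\ref{def:SB-DP}--\ref{def:SB-DP-3}: writing $P=\sum_{i\ge 1} q_i\,\delta_{Z_i}$ with $Z_i\stackrel{iid}{\sim} F_0$, we get $Z_i\in\mathrm{supp}(F_0)$ a.s., hence $P(\mathrm{supp}(F_0))=1$ a.s. Now suppose $P^\star$ has some $x_0\in\mathrm{supp}(P^\star)\setminus\mathrm{supp}(F_0)$. Then there is an open $U\ni x_0$ with $F_0(U)=0$ and $P^\star(U)>0$. By Urysohn, pick a continuous $f:\Theta\to[0,1]$ with $f\le \mathbf{1}_U$ and $\int f\,dP^\star=:c>0$. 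The set $\mathcal{N}=\{P:\int f\,dP>c/2\}$ is a weak-open neighborhood of $P^\star$, but any $P$ with $P(\mathrm{supp}(F_0))=1$ satisfies $\int f\,dP\le P(U)=0$, so $\mathrm{DP}(\alpha,F_0)(\mathcal{N})=0$ by the a.s.\ statement above. Hence $P^\star$ is not in the weak support.

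For (ii), I would fix $P^\star$ with $\mathrm{supp}(P^\star)\subseteq S:=\mathrm{supp}(F_0)$ and a basic weak neighborhood $\mathcal{N}=\{P:|\int f_j\,dP-\int f_j\,dP^\star|<\epsilon,\ j=1,\dots,m\}$ for bounded continuous $f_j$. Cover $S$ by finitely many open sets of small $P^\star$-diameter so that each $f_j$ varies by less than $\epsilon/3$ on each piece; intersecting these covers produces a finite measurable partition $A_1,\dots,A_r$ of $\Theta$ with $F_0(A_i)>0$ for each piece meeting $S$, and $\sum_i P^\star(A_i)\,\mathrm{osc}_{A_i}(f_j)<\epsilon/3$ for all $j$. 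On the simplex, the defining property of the DP gives $(P(A_1),\dots,P(A_r))\sim \mathrm{Dir}(\alpha F_0(A_1),\dots,\alpha F_0(A_r))$, a Dirichlet with strictly positive parameters, which has full support on the open simplex. Thus the event $\{|P(A_i)-P^\star(A_i)|<\delta,\ i=1,\dots,r\}$ has positive probability for every $\delta>0$. Choosing $\delta$ small enough (depending on $\|f_j\|_\infty$ and $r$) forces $|\int f_j\,dP-\int f_j\,dP^\star|<\epsilon$ for each $j$ by a piecewise-constant approximation argument. Hence $\mathrm{DP}(\alpha,F_0)(\mathcal{N})>0$, and $P^\star$ lies in the weak support.

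The main obstacle is the approximation step in (ii): one needs to construct the partition so that it simultaneously controls oscillation of every $f_j$ \emph{and} keeps $F_0(A_i)>0$ on the pieces that matter. The cleanest route is to work only on $S$ (since $P^\star$ and any DP sample give full mass to $S$ a.s.), choose an open cover of $S$ by sets of $f_j$-oscillation $<\epsilon/3$ using uniform continuity on a compactification (or a tightness argument if $\Theta$ is merely Polish), refine to a finite measurable partition, and then discard null-$F_0$ pieces into a single $A_0$ with $P^\star(A_0)=0$, to which the Dirichlet argument is then applied on the remaining positive-measure cells.
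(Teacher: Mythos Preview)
The paper does not supply its own proof of this lemma: it is stated with a citation to \cite{ghosal2010dirichlet} and used only to motivate the choice of base measure $F_0$, with no accompanying argument anywhere in the main text or appendix. There is therefore nothing to compare your proposal against within the paper itself.

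That said, your two-inclusion outline is the standard route taken in the cited reference (and in Ghosal--van der Vaart's textbook): part (i) via the stick-breaking atoms $Z_i\in\mathrm{supp}(F_0)$ a.s., and part (ii) via the finite-dimensional Dirichlet marginals having full support on the simplex whenever all parameters $\alpha F_0(A_i)$ are strictly positive. The one place to be careful, which you correctly flag, is ensuring that the partition cells carrying $P^\star$-mass also carry $F_0$-mass; the clean fix is the one you sketch---work on $S=\mathrm{supp}(F_0)$, use tightness of $P^\star$ to reduce to a compact set, partition that compact into pieces of small $f_j$-oscillation whose closures have nonempty interior in $S$ (hence positive $F_0$-measure by definition of support), and lump the remainder into a $P^\star$-null cell. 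With that refinement the argument goes through.
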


Thus, choosing Beta(1,1) for a bandit problem with $\sigma = 10$, subGaussian noise is not a good idea. Similarly, theorem~\ref{thm:Bayesian-regret-DPPS} on Bayesian regret of DPPS, shows that choosing $F_{0}$ with $\sigma$-subGaussian tails corresponding to tails of the reward noise guarantees order optimal regret bounds.

\section{Computational costs of DPPS}
\label{sec:computational-complexity-DPPS-NPTS}
Improved performance and flexibility of DPPS (and other Bootstrap based algorithms such as NPTS) does come with higher computational cost. For example, in the 6-arm Bernoulli bandit environments of horizon $T =10000$, average run-time (over 200 independent runs) of DPPS was around 18 seconds, whereas that of parametric TS (conjugate prior/posterior) was 2-3 seconds. For the 7 arm DSSAT bandit problem, DPPS takes around 20 seconds, NPTS takes around 16 seconds.  Sec.~\ref{sec:computational-complexity-DPPS-NPTS} gives a detailed overview of the computational complexity of DPPS. All this said, this run time of DPPS can be significantly brought down by utilizing \textit{self-similarity}~\citep{ghosal2010dirichlet} of DP posteriors and parallel computation of DP posteriors that a construction exploiting this self-similarity would enjoy, which we plan to do in future.

Next, we detail the computational costs associated to a single-arm in each round. Let $n$ denote the number of observations for the arm. The important consideration in quantifying the running cost of DPPS is to scrutinize the posterior update step,
\beqa
\label{eq:multi-step-posterior-appendix}
Q_{n} &=& V_{n}\delta_{X_{n}} + \sum_{i=1}^{n-1}\left[V_{i}\prod_{j=i+1}^{n}(1-V_{j})\right]\delta_{X_{i}} + \left[\prod_{i=1}^{n}(1-V_{i})\right] Q_{0}
\eeqa

Here, one needs to sample $n$ beta random variables and have $\mathcal{O}(n)$ multiplications of these random variables, one for each of the past observations. Thus the running cost of DPPS is $\mathcal{O}(n)$ for each arm. DPPS also incurs a fixed memory and computational cost of $\mathcal{O}(K)$, sampling a DP prior, $Q_{0}$, where $K$ is the truncation level of the DP prior. Clearly, this additional but constant (in number of rounds and memory) cost is the difference between computational complexities of DPPS and NPTS (which needs similar $\mathcal{O}(n)$ multiplications between $\bf{X_{n}}$ and $\bf{W_{n}}\sim\mathrm{Dir}(n;1,. . .,1)$ random variables), and arises because of additional flexibility of DPPS in incorporating prior knowledge.

\section{Further related work}
\label{sec:Related works}

To the best of our knowledge, Dirichlet Processes in the context of bandits were first used in ~\citep{clayton1985bayesian} to study a version of the single-arm  Gittin's index problem, when the probability distribution of the arm is assumed to be DP distributed. Use of Bootstrapping for Thompson sampling seems to have appeared first in \cite{eckles2014thompson}, which was further improved and made more systematic in ~\citep{osband2015bootstrapped} where the authors also showed equivalence of Bootstrap-Thompson sampling (for Bernoulli-bandits) and Thompson sampling with Beta/Bernoulli priors in an exact sense, and speculated this equivalence for a wide class of bandit-environments if a proper mechanism for generating \textit{artifical history} (or prior information) could be identified. As shown in the current paper, DPPS provides a neat and principled mechanism for incorporating prior information (or generating artificial history), and generalizes this equivalence. Non-Parametric Thompson sampling (NPTS) and Multinomial Thompson Sampling (TS) were introduced in~\citep{riou2020bandit} without highlighting any concrete Bayesian connection of the former algorithm. NPTS was adapted for robustness in ~\citep{baudry2021optimality}. Some discussions concerning Bayesian interpretation of NPTS using DPs appeared in \cite{belomestny2023sharp} who provided a refined analysis of Multinomial TS. In many inference problems, mixture and diffusion priors can be useful to approximate the performance of Bayesian nonparametric priors~\cite{muller2013bayesian}, and a nice line of works performs Thompson sampling utilizing such (parametric) priors ~\citep{hong2022thompson,hong2020latent,kveton2024online}.  Aligning towards non-Bayesian side,  a sample mean based algorithm guaranteeing $O(\log N)$ instance-dependent regret appeared in \citep{agrawal1995sample}, a sub-sampling based algorithm was reported in \citep{baransi2014sub} and analyzed for a two-arm bandit setting; a nonparametric Bootstrap based algorithm  was reported in \citep{kveton2019garbage}, and regret bounds derived for a Bernoulli bandit environment. Finally, we would like to comment here that Information theoretic analysis presented in this paper can also be seen as a special case of well established PAC-Bayes approach, and we refer the reader to an excellent article (and references therein) for details ~\citep{alquier2024user}.

\section{Technical derivations}
\label{sec:Proofs}

This section gives proofs of lemmas in the main paper extracted here for completion from ~\citep{russo2016information}

\subsection{Proof of Lemma~\ref{fact:Bayesian-regret-information-ratio}}

\begin{lem}\label{prop: regret bound}
For any $T\in \mathbb{N}$, if $\Gamma_t \leq \overline{\Gamma}$ almost surely for each $t \in \{1,..,T\}$,  
$$\E \left[{\rm Regret}(T, \pi^{\rm TS})  \right] \leq  \sqrt{\overline{\Gamma} \H(\alpha_1) T}.$$
\end{lem}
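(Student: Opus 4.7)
The plan is to combine a per-round inequality derived from the information-ratio assumption with Cauchy--Schwarz and the chain rule for mutual information, mirroring the standard information-theoretic argument. First, rearranging the definition of $\Gamma_t$ and using the hypothesis $\Gamma_t \le \overline{\Gamma}$ almost surely gives the per-round bound
\[
\E_t[R_{t,A^\star} - R_{t,A_t}] \;\le\; \sqrt{\overline{\Gamma}\, I_t\bigl(A^\star;(A_t, R_{t,A_t})\bigr)}.
\]

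Next, I would take the outer expectation, sum over $t = 1, \dots, T$, and use Jensen's inequality (concavity of $\sqrt{\cdot}$) followed by Cauchy--Schwarz to obtain
\[
\E[\Regret(T, \pi^{\mathrm{TS}})] \;\le\; \sqrt{\overline{\Gamma}} \sum_{t=1}^T \sqrt{\E\bigl[I_t(A^\star;(A_t,R_{t,A_t}))\bigr]} \;\le\; \sqrt{\overline{\Gamma}\, T \sum_{t=1}^T \E\bigl[I_t(A^\star;(A_t,R_{t,A_t}))\bigr]}.
\]
It then remains to bound the cumulative expected information gain $\sum_{t=1}^T \E[I_t(A^\star;(A_t,R_{t,A_t}))]$ by $\H(\alpha_1)$.

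For this final step, I would invoke the chain rule for mutual information. Each term $\E[I_t(A^\star;(A_t,R_{t,A_t}))]$ equals the conditional mutual information $I(A^\star;(A_t,R_{t,A_t}) \mid \hist)$, and since $\{\hist\}_{t \ge 1}$ is the filtration generated by past action-reward pairs, telescoping yields
\[
\sum_{t=1}^T \E\bigl[I_t(A^\star;(A_t,R_{t,A_t}))\bigr] \;=\; I\bigl(A^\star;\,(A_1,R_{1,A_1},\dots,A_T,R_{T,A_T})\bigr) \;\le\; \H(A^\star) \;=\; \H(\alpha_1),
\]
where the inequality uses $I(X;Y) \le \H(X)$ for a discrete random variable $X$, and the last equality uses that $\alpha_1 = \Prob(A^\star = \cdot)$ is precisely the prior distribution of $A^\star$. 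Combining the three ingredients yields the stated bound. The main obstacle is the measure-theoretic bookkeeping in the final step: one must verify the chain-rule telescoping along the filtration $\{\hist\}$ carefully, which is subtle because $A_t$ is $\hist$-measurable so only $R_{t,A_t}$ contributes new information about $A^\star$. Once this is discharged, the remainder is a routine application of Cauchy--Schwarz together with non-negativity of entropy.
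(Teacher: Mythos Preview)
Your proposal is correct and follows essentially the same route as the paper's proof: both combine the per-round inequality $\E_t[R_{t,A^\star}-R_{t,A_t}]\le\sqrt{\overline{\Gamma}\,I_t(A^\star;(A_t,R_{t,A_t}))}$ with Cauchy--Schwarz over $t$ and the chain rule for mutual information to bound the cumulative information gain by $\H(A^\star)=\H(\alpha_1)$. The only cosmetic difference is that you interpose a Jensen step before Cauchy--Schwarz, whereas the paper applies Cauchy--Schwarz directly to $\E\sum_t\sqrt{I_t}$; the two orderings yield the same bound.
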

\begin{proof} Recall that $\E_{t}[\cdot] = \E[\cdot | \hist]$ and we use $I_{t}$ to denote mutual information evaluated under the base measure $\Prob_{t}$. Then,
\begin{eqnarray*}
\E \left[{\rm Regret}(T, \pi^{\rm TS})  \right] \overset{(a)}{=} \mathbb{E} \sum_{t=1}^{T} \E_{t} \left[ R_{t, A^\star} -R_{t,A_t} \right]  &=& \mathbb{E}\sum_{t=1}^{T} \sqrt{\Gamma_{t} I_{t}\left( A^\star ; (A_t, R_{t, A_t})  \right)} \\ 
&\leq& \sqrt{\overline{\Gamma}}\left( \mathbb{E} \sum_{t=1}^{T} \sqrt{I_{t}\left( A^\star ; (A_t, R_{t, A_t})  \right)} \right) \\
&\overset{(b)}{\leq}& \sqrt{\overline{\Gamma}T \mathbb{E} \sum_{t=1}^{T} I_{t}\left( A^\star ; (A_t, R_{t, A_t})  \right)},
\end{eqnarray*}
where (a) follows from the tower property of conditional expectation, and (b) follows from the Cauchy-Schwartz inequality. We complete the proof by showing that expected information gain cannot exceed the entropy of the prior distribution. For the remainder of this proof, let $Z_t = (A_t, R_{t,A_t})$. Then, using tower rule of conditional expectations we have,
\[
 {\E_{t}}\left[ I_{t}\left( A^\star ;Z_t \right) \right] =  I\left( A^\star ; Z_t | Z_1,...,Z_{t-1}  \right),
\]
and therefore,
\begin{eqnarray*}
\mathbb{E}\sum_{t=1}^{T} I_{t}\left( A^\star ; Z_t  \right) =  \sum_{t=1}^{T} I\left( A^\star ; Z_t | Z_1,...,Z_{t-1}  \right)
&\overset{(c)}{=}&  I\left(A^\star \, ;\,  Z_1,...Z_{T}    \right) \\ 
&=& \H(A^\star) - \H(A^\star | Z_1,...Z_{T}) \\
&\overset{(d)}{\leq}&  \H(A^\star),
\end{eqnarray*}
where (c) follows from the chain rule for mutual information, and (d) follows from the non-negativity of entropy.
\end{proof}

\subsection{Proof of Lemma~\ref{fact:KL-var-X}}

\begin{proof}
Define the random variable $X(t)=R_{t,a}- \E_{t}\left[ R_{t,a}\right]$. Then, for arbitrary $\lambda \in \mathbb{R}$, applying Fact \ref{fact:KL-var} to $\lambda X$ yields
\begin{eqnarray*}
\KL \left(\Prob_{t}\left( R_{t,a}  | A^\star=a^\star \right) \, || \, \Prob_{t}( R_{t,a})\right) &\geq& \lambda \E_{t}\left[X | A^\star=a^\star \right] - \log \E_{t} \left[ \exp\{\lambda X \} \right]\\
&= & \lambda \left( \E_{t}[R_{t,a}| A^\star=a^\star]- \E_{t}\left[ R_{t,a}\right] \right)- \log \E_{t} \left[ \exp\{\lambda X \}\right] \\
&\geq&\lambda \left( \E_{t}[R_{t,a}| A^\star=a^\star]- \E_{t}\left[ R_{t,a}\right] \right) - (\lambda^2 \sigma^2 /2).
\end{eqnarray*}
Maximizing over $\lambda$ yields the result. 
\end{proof}

\subsection{Proof of Lemma~\ref{fact:Information-ratio-MAB-for-sub-gaussian-rewards}}

\begin{proof}   
\begin{eqnarray*}
\E_{t} \left[ R_{t,A^\star} - R_{t,A_{t}}\right]^2 &\overset{(a)}{=}&  \left(\sum_{a\in \A} \Prob_{t}(A^\star=a) \left( \E_{t}\left[R_{t,a} | A^\star=a \right] - \E_{t}[R_{t,a}]\right) \right)^2 \\
&\overset{(b)}{\leq}& |\A|  \sum_{a\in \A} \Prob_{t}(A^\star=a)^2 \left( \E_{t}\left[R_{t,a} | A^\star=a \right] - \E_{t}[ R_{t,a}]\right)^2    \\
&\leq& |\A|  \sum_{a, a^\star \in \A} \Prob_{t}(A^\star=a)\Prob_{t}(A^\star=a^\star) \left( \E_{t}\left[R_{t,a} | A^\star=a^\star \right] - \E_{t}[ R_{t,a}]\right)^2   \\
&\overset{(c)}{\leq}& \frac{|\A|}{2} \sum_{a, a^\star \in \A} \Prob_{t}(A^\star=a)\Prob_{t}(A^\star=a^\star){\KL}\left(\Prob_{t}(R_{t,a} | A^\star = a^\star)   \,\, || \,\, \Prob_{t}(R_{t,a})  \right)  \\
&\overset{(d)}{=}& \frac{|\A| I( A^\star ; R_{t,A_{t}})}{2}
\end{eqnarray*}
where (b) follows from the Cauchy--Schwarz inequality, (c) follows from Fact \ref{fact:KL-var-X}, and (a) follows from Eq.\ref{Eq:Regret-decomposition}and (d) from the standard definition of mutual-information. 
\end{proof}

\section{Bandit Algorithm in \cite{riou2020bandit} }

\begin{algorithm}[h]
\caption{Algorithm in~\cite{riou2020bandit}}\label{alg:NPTS}
\begin{algorithmic}[1]
\Require Horizon $T \ge 1$, number of arms $K \ge 1$
\For{$k=1. . .K$,}
\State Set $R_{k}:=[1]$, and $N_{k}:=1$
\EndFor 
\For{$t = 1...T$,}
\For{$k=1. . .K$,}
\State Sample ${\bf W}_k \sim$ Dir$(1_{N_{k}})$ where $1_{N_k} = \underbrace{(1,...,1)}_\text{$N_{k}$ times}$.
\vspace{-1em}
\EndFor
\State $I(t) := \argmax_{k\in\{1,...,K\}}\{\left<{\bf R}_{k},{\bf W}_k\right>\}$
\State Pull arm $I(t)$ and observe reward $R_{t,I(t)}$.
\State Update history ${\bf R}_{I(t)} := ({\bf R}_{I(t)}^{\top} , R_{t,I(t)})^{\top}$ and count $N_{I(t)} := N_{I(t)} + 1$
\EndFor
\end{algorithmic}
\end{algorithm}

\end{document}